\newtheoremstyle{indented}
  {3pt}% space before
  {3pt}% space after
  {\addtolength{\@totalleftmargin}{3.5em}
   \addtolength{\linewidth}{-3.5em}
   \parshape 1 3.5em \linewidth}% body font
  {}% indent
  {\bfseries}% header font
  {.}% punctuation
  {.5em}% after theorem header
  {}% header specification (empty for default)
\theoremstyle{plain}
\newtheorem{theorem}{Theorem}
\newtheorem{lemma}{Lemma}
\newtheorem{corollary}{Corollary}
\newtheorem{definition}{Definition}[section]
\newtheorem*{remark}{Remark}
\DeclareMathOperator*{\argmax}{argmax}
\newcommand\figref{Figure~\ref}
\title{Mixtures of Gaussian Processes for regression under multiple prior distributions}
\author{
  Sarem Seitz \\
  Department of Information Systems and Applied Computer Science\\
  Otto-Friedrich-University\\
  Bamberg, Germany\\
  \texttt{sarem.seitz@uni-bamberg.de} \\

}
\begin{document}
\maketitle

\begin{abstract}
When constructing a Bayesian Machine Learning model, we might be faced with multiple different prior distributions and thus are required to properly consider them in a sensible manner in our model. While this situation is reasonably well explored for classical Bayesian Statistics, it appears useful to develop a corresponding method for complex Machine Learning problems. Given their underlying Bayesian framework and their widespread popularity, Gaussian Processes are a good candidate to tackle this task. We therefore extend the idea of Mixture models for Gaussian Process regression in order to work with multiple prior beliefs at once - both a analytical regression formula and a Sparse Variational approach are considered. In addition, we consider the usage of our approach to additionally account for the problem of prior misspecification in functional regression problems.
\end{abstract}

% keywords can be removed
\keywords{Gaussian Processes \and Prior Pooling \and Mixture Models}

\section{Introduction}
Gaussian Processes (GPs) are a widespread and powerful method in Statistical Learning, allowing to flexibly deal with a large variety of learning tasks. GPs can be interpreted as the Bayesian equivalent to frequentist kernel models and thus are able to deal with epistemic uncertainty by construction. While traditional kernel methods are often outperformed by modern Deep Learning models, our general understanding is much more mature for the former. In the case of GPs, it is possible to derive many useful theoretical properties and guarantees via Linear Algebra and Stochastic Process theory.

One area where Bayesian kernel methods are particularly powerful is the integration of prior knowledge into the learning process. The choice of covariance kernel directly determines the space of functions over which the GP prior is put. However, as shown in \cite{driscoll1973reproducing}, the functions contained in the covariance RKHS do not necessarily match the prior sample paths drawn from a GP with corresponding covariance function, hence some care has to be taken when expressing functional prior knowledge via GPs. 

While it is fairly common to use rather general kernel functions like the Squared Exponential (SE-) kernel which results in a very flexible prior distribution, this approach ignores to some extent the capability to express more specific functional prior knowledge via GPs. This is certainly owed to the fact that narrowing down the space of functions considered to a very specific one, say the space of linear functions, would not take into account the complexity of typical Machine Learning problems which are often governed by highly complex functional relationships. 

Such challenges are particularly concerning when a GP prior distribution places zero probability on the target function, a problem known as \textit{prior misspecification}. In this case, the posterior distribution cannot converge to the true, underlying function either. While, for example, \cite{infinitepriormisspec} show that a misspecified model still converges to a reasonable posterior in the KL-sense, there is obviously no guarantee that the resulting posterior will be of any practical use.  

As a consequence for GP models, there is a trade-off between using highly specific prior distributions and considering rather broad functional priors. While the former can yield better predictive performance with less samples, the risks of prior misspecification increases for less flexible prior distributions. Consider for example a GP prior with Linear kernel function - if the data generating process is linear as well, the resulting posterior can be expected to produce reasonable results on unseen data, even if the training set is rather small. In addition, prediction for new datapoints outside the distribution of the training set will likely yield superior results compared to more flexible priors as long as the linear relation holds over the whole domain of the data. On the other hand, if the underlying function is highly non-linear, a Linear kernel will probably fail to reach the performance of, for example, a Squared Exponential (SE-) kernel. 

For a very simple example, consider the space of all possible functions that map some input domain to the real line. It would be inefficient to place all probability mass on the subset of functions induced by a Linear kernel and zero probability mass unless there is clear evidence that this must be the case. On the other hand, if we scatter our prior probability mass over a larger subspace - say the space induced by the SE-kernel - our inference might be inefficient if the true, inferred function actually lies in the subspace induced by the linear kernel. 

On another note, once we actually consider specific, humanly influenced prior distributions for GPs, we could easily encounter a modeling problem where multiple prior distributions from various sources are available. Hence, the need arises to account for such a set of prior beliefs in a sensible manner. A reasonable solution under such condition would condense the information from all prior distributions provided and at the same time - with reference to the misspecification issue from before - allow for the possibility that all those prior beliefs might still be far off from the function in question. 

Hence, our goal in this paper is to develop a GP framework that allows us to deal with multiple prior views, while at the same time account for the realistic assumption that all those priors are incorrect. We therefore apply considerations from Bayesian prior pooling \cite{combiningexperts} and the idea of a 'catch-all' prior distribution introduced in \cite{baysianopenmindedness}. As it turns out, Mixtures of Gaussian Processes (MGPs) are a reasonable candidate to carry out our considerations in a GP framework. While a closed form regression formula will be shown, we will also consider sparse variational MGPs (SV-MGPs) as described in \cite{hensmanbig} for standard GPs. This allows our idea to work more or less independently from the amount of training data.

\section{Gaussian Processes and sparse approximations}
Before summarizing the fundamentals of GPs and SVGPs, we want to briefly justify our decision to work with GPs instead of other possible options: While we could approach our problem in weight space, for example via Bayesian Neural Networks (BNNs) and Bayesian Linear Models (BLM) with meaningful basis functions, the functional GP route offers a more self-contained and maturely researched methodology. Describing the prior belief expressed by a combination of a BNN and BLM is by far less straightforward than the functional variant using a combination of an SE- and a Linear kernel. 

Formally, a GPs \cite{rasmussengaussprocs} introduce a prior $p(f)$\footnote{We suppress the notion of function input $X$} over functions and a likelihood $p(y|f)$ where interest lies in the posterior distribution of f, $p(f|y)$ which is usually inferred via Bayes' law  

\begin{equation}\label{bayeslaw}
    p(f|y)=\frac{p(y|f)p(f)}{p(y)}.
\end{equation}

The prior distribution is a Gaussian Process specified via

\begin{equation}\label{gaussproc}
    p(f)=\mathcal{GP}(f|m(\cdot),k(\cdot,\cdot'))
\end{equation}

A Gaussian Process is fully defined by its mean function $m(\cdot):\mathcal{X}\mapsto\mathbb{R}$ and covariance kernel function $k(\cdot,\cdot):\mathcal{X}\times\mathcal{X}\mapsto \mathbb{R}_0^+$ where input domain $\mathcal{X}\subseteq\mathbb{R}^k$ the input domain of $f$. It is fairly common to set the prior mean $m(x)=0$ for all $x\in\mathcal{X}$ unless a more suitable mean function is known in advance. A popular choice for $k(\cdot,\cdot)$ is the ARD\footnote{\textbf{A}utomatic \textbf{R}elevance \textbf{D}etection}-kernel

\begin{equation}\label{ardkern}
    k_{ARD}(x,x')=\sigma^2 exp(-0.5 (x-x')\Sigma^{-1}(x-x')))
\end{equation}

where $\Sigma$ is a diagonal matrix with elements $diag(\Sigma)=[l_1^2,...,l_m^2]$. The primary strength of this kernel choice is its ability to naturally work with multi-dimensional input vectors $x$ and evaluate the relevance of each dimension of the input in a principled manner. In case that $\mathcal{X}\subseteq\mathbb{R}$, i.e. the input domain is one-dimensional, \eqref{ardkern} reduces to the well-known SE-kernel from earlier.
 
Usually, Gaussian Processes models are treated through a symmetric, positive semi-definite Gram-Matrix $K$ induced by the kernel function, where $K_{(ij)}=k(x_i,x_j)$.

In case of a Gaussian likelihood $p(y|f)=\mathcal{N}(y|f,\sigma^2)$, it is possible to directly calculate a corresponding posterior distribution for new inputs $X_*$ as

\begin{equation}\label{postpred}
    p(f_*|y)=\mathcal{MVN}(f_*|\Lambda y, K_{**}-\Lambda (K_{nn}+I\sigma^2) \Lambda^T))
\end{equation}

with $\Lambda=K_{*n}(K_{nn}+I\sigma^2)^{-1}$, $K_{*n,(ij)}=k(x^*_i,x_j)$, $K_{**,(ij)}=k(x^*_i,x^*_j)$; $I$ denotes the identity matrix with dimension according to $K_{nn}$. The latter denotes the kernel Gram-Matrix evaluated over all training inputs. We will only consider GPs under a Gaussian likelihood but our approach could be extended to other commonly used likelihoods. In addition, we assume that observations $y$ are sampled i.i.d conditional on $f$, i.e. $p(y|f)=\prod_{i=1}^n p(y_i|f_i)$ for a sample of size $n$. We therefore only need to consider the diagonal of the covariance matrix in \eqref{postpred} when deriving the posterior predictive distribution for $y_*$.\\

The major obstacle for scalable inference with this formulation lies in the quadratic complexity of the Gram-Matrix which makes the standard GP method infeasible for larger datasets. A popular solution to this issue is the usage of Sparse Variational Gaussian Processes (SVGPs) \cite{titsiassvgp, hensmanbig, hensmansvgpc}. In SVGPs, we approximate the true posterior process $p(f|y)$ by a variational process $q(f)$ obtained as

\begin{equation}\label{varproc}
    q(f)=\int_u p(f|u)q(u)du
\end{equation}

where $u$ denotes $m$ inducing variables corresponding to $m$ so-called \textit{inducing locations} $Z\in\mathbb{R}^{k\times m}$ and, primarily for mathematical convenience, $q(u)=\mathcal{MVN}(u|a,LL^T)$, $a\in\mathbb{R}^m$, $L\in \mathbb{R}^{m\times m}$. This setup leads, after applying some calculus for Gaussian distributions, to a tractable functional approximation at any input $X_*$, namely

\begin{equation}\label{varproc}
    q(f)=\mathcal{MVN}(f|\tilde{\Lambda} a, K_{**}-\tilde{\Lambda}(K_{mm}-LL^T)\tilde{\Lambda}^T)
\end{equation}

with $\tilde{\Lambda}=K_{*l}K_{ll}^{-1}$, $K_{*l,(ij)}=k(x^*_i,z_j)$, $K_{mm,(ij)}=k(z_i,z_j)$. Contrary to the plain GP formulation, we now have to find optimal $a,L,Z$ by maximizing an evidence lower bound $ELBO$ with respect to those parameters:

\begin{equation}\label{standardelbo}
    ELBO = \sum_{i=1}^n\mathbb{E}_{q(f)}\left[p(y_i|f_i)\right]-KL(q(u)||p(u))
\end{equation}

where $KL(\cdot||\cdot)$ denotes the Kullback Leibler (KL) divergence and we again assume that the $y_i$ are independent from each other given $f_i$. The distribution of $p(u)$ is obtained by evaluating the GP prior distribution at inducing locations $Z$.

Using a variational sparse approach, it is possible to considerably reduce the complexity and allow for applications large big data problems as well. Also \eqref{standardelbo} exemplifies that we are dealing with a trade-off between the variational posterior maximizing the likelihood and having the variational distribution as similar as possible to the prior distribution in the KL-sense.

\section{Prior pooling}
Let us get back to the situation outlined in the beginning where we were considering multiple functional prior beliefs that we can properly express as GP prior distributions $p(f_1),...,p(f_k)$. Those prior beliefs might be derived from other existing models or studies or could be elicited from experts and can range from complex functional expressions to simple heuristics. Keeping in mind the application of Bayes' law for GPs as in \eqref{bayeslaw}, we require some aggregation mapping $p(f_1),...,p(f_k)\mapsto p(f)$ where we simply denote by $p(f)$ a condensed prior distribution that summarizes the information from the individual priors. 

This obviously begs the question of how such mapping could look like in order for the resulting aggregate prior to make sense. As it turns out, there already exists a reasonable amount of literature on that question in classic Bayesian statistics under the name \textit{prior pooling}. For a deeper overview on the topic we refer to \cite{logpooling, linearpooling, combiningexperts, linearpools}. 

A common and intuitive way to aggregate a set of prior beliefs $p_1(f),...,p_k(f)$ into a single one is known as \textit{linear pooling} where the aggregate prior probability distribution is a weighted average of the individual ones:

\begin{equation}\label{linearpooling}
    p(f)=\sum_{i=1}^k \pi_i p_i(f),\quad \sum_{i=1}^k \pi_i = 1.
\end{equation}

Apparently, this expression denotes a mixture probability distribution as, in Machine Learning, is commonly seen in clustering methods, namely \textit{Gaussian Mixture Models} (\cite{gaussianmixtures}). While there exist other options, the linear pooling variant possesses, on the one hand, meaningful theoretical properties (\cite{linearpools}) and, on the other hand turns out to work well with Gaussian Process models. 

Considering the existing literature on mixture models with GPs, we should explicitly keep in mind that \eqref{linearpooling} is derived from a weighted average of probability distributions. This is in contrast to the commonly seen usage of mixture models where the generating model is assumed to have a hierarchical structure with a multinomial prior over the component distributions:

\begin{equation}\label{hierarchicalmix}
    \tilde{p}(f)=\int_{\Pi}p(f|\pi)p(\pi) d\pi,\quad p(\pi)=Multinom(\pi|\pi_1,...,\pi_k)
\end{equation}

The distinction is crucial when considering posterior inference under prior distributions for mixture coefficients $\pi_i$.

\section{Regression via Mixtures of Gaussian Processes}
Since \eqref{linearpooling} describes a mixture distribution and as Gaussian type Mixture Models are well established in Machine Learning, it appears reasonable to consider a Mixture of Gaussian Processes in order to achieve prior aggregation for functional prior distributions.

Recall that a Mixture of $k$ Normal distributions has the following density:

\begin{equation}\label{gaussianmixture}
    p(x)=\mathcal{MXN}(x|\pi_1,...,\pi_k;\mu_1,...,\mu_k;\Sigma_1,...,\Sigma_k)=\sum_{i=1}^k\pi_i \mathcal{N}(x|\mu_i,\Sigma_i).
\end{equation}

Assuming that the reader is familiar with the general ideas of stochastic processes and using \eqref{gaussianmixture} we can easily derive a formal definition of an MGP:\newline

\begin{definition}[Mixture of Gaussian Processes]
    A continuous Stochastic Process $f:\mathcal{X}\mapsto \mathbb{R}$ is a \textit{k-Mixture of Gaussian Processes},
    
    \begin{equation}\label{mgpdef}
        f\sim\mathcal{MGP}(\pi_1,...,\pi_k;m_1(\cdot),...,m_k(\cdot);k_1(\cdot,\cdot),...,k_k(\cdot,\cdot))
    \end{equation}
    
    if its finite dimensional marginals at an index set $x_1,...,x_n\in\mathcal{X}$ are distributed as a Mixture of $k$ Multivariate Normal distributions, i.e.
    
    \begin{equation}\label{mgpmarginals}
        p(f_{x_1},...,f_{x_n})=\sum_{i=1}^k \pi_i \mathcal{N}(f_{x_1},...,f_{x_n}|\mu_i,\Sigma_i)
    \end{equation}
    
    with means $\mu_i$ and covariances $\Sigma_i$ constructed via the respective mean and kernel functions, $m_i(\cdot),k_i(\cdot,\cdot)$ and $\sum_{i=1}^k \pi_i=1$.\newline
    
\end{definition}

It would obviously be ideal if we could directly transfer known formulas from plain GP models to the mixture variant. As we will show in the next subsection, all relevant properties can indeed intuitively be extended to this variant.

\subsection{General Results}\label{generalresults}

The key consideration for this subsection is the interplay between the linearity of the integral and the general formula for an arbitrary mixture distribution $p_{M}(x)$:

\begin{equation}\label{mixturenintegral}
    \int p_{M}(x)dx=\int \sum_{i=1}^k \pi_i p(x)dx = \sum_{i=1}^k \pi_i \int p(x) dx
\end{equation}

Namely, we can decompose an integration over a mixture distribution into weighted sums of integrations of its components. We will now use \eqref{mixturenintegral} and the \textit{Kolmogorov extension theorem} to prove the existence of our proposed mixture stochastic process. This rather simple result is primarily included for completeness' sake as we haven't encountered it yet, despite MGPs being quite regularly seen nowadays. We first state here the extension theorem without a prove (see for example \cite{oksendalbook, bremaud2020probability}):\newline

\begin{theorem}[Kolmogorov extension theorem]
    For all $x_1,...,x_m\in\mathcal{X}, n\in\mathbb{N}$ let $\nu_{x_1,...,x_m}$ be a probability measure on $(\mathbb{R}^n)^m$ subject to the following conditions: 
    \begin{enumerate}
        \item $\nu_{x_{\sigma(1)},...,x_{\sigma(m)}}(B_{\sigma(1)}\times\cdots\times B_{\sigma(m)})=\nu_{x_{1},...,x_{m}}(B_{1}\times\cdots\times B_{m})$ for all permutations $\sigma$ on $\{1,2,...,m\}$ and
    
        \item $\nu_{{x_1},...,{x_m}}(B_1\times\cdots\times B_m)=\nu_{{x_1},...,{x_m},{x_{m+1}},...,x_{m+k}}(B_1\times\cdots B_m\times \mathbb{R}^n\times\cdots\times\mathbb{R}^n)$ for all $k\in\mathbb{N}$ where the right hand side has a total of $m+k$ factors. 
    \end{enumerate}
    Then there exists a probability space $(\Omega, \mathcal{F},P)$ and a stochastic process $\{f_x\}$ on $\Omega, f_x:\Omega\mapsto\mathbb{R}^n$ subject to
    
    $$\nu_{x_i,...,x_k}(B_1\times\cdots B_m)=P[f_{x_1}\in B_1, \cdots, f_{x_k}\in B_k]$$
    
    for all $x_i\in\mathcal{X},m\in\mathbb{N}$ and Borel sets $B_i$.\newline
\end{theorem}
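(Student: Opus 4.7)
The plan is to construct the process by the canonical route: take $\Omega = (\mathbb{R}^n)^{\mathcal{X}}$ equipped with the $\sigma$-algebra $\mathcal{F}$ generated by the cylinder sets, and define $f_x : \Omega \to \mathbb{R}^n$ as the coordinate projection $f_x(\omega) = \omega(x)$. The problem then reduces to constructing a probability measure $P$ on $(\Omega, \mathcal{F})$ whose finite-dimensional pushforwards under $(f_{x_1}, \ldots, f_{x_m})$ agree with the prescribed $\nu_{x_1, \ldots, x_m}$.

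First I would define $P$ on the algebra $\mathcal{A}$ of cylinder sets: for a cylinder $C = \{\omega : (\omega(x_1), \ldots, \omega(x_m)) \in B_1 \times \cdots \times B_m\}$, put $P(C) = \nu_{x_1, \ldots, x_m}(B_1 \times \cdots \times B_m)$. Conditions (1) and (2) are exactly what is needed to make $P$ well defined on $\mathcal{A}$: condition (1) handles invariance under reordering of the indexing points, while condition (2) handles invariance under adjoining dummy coordinates when the same cylinder is described via a larger finite set. Finite additivity on $\mathcal{A}$ then follows because any finite family of cylinders can be lifted to a common finite index set $\{x_1, \ldots, x_M\}$, where additivity reduces to that of the single Borel measure $\nu_{x_1, \ldots, x_M}$.

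The main obstacle, and the only substantive step, is to upgrade finite additivity to countable additivity on $\mathcal{A}$; once that is done, Carath\'eodory's extension theorem uniquely extends $P$ to $\mathcal{F}$. By the standard reduction, countable additivity on an algebra is equivalent to continuity from above at $\emptyset$: whenever $C_1 \supseteq C_2 \supseteq \cdots$ is a decreasing sequence of cylinders with $\bigcap_n C_n = \emptyset$, we must show $P(C_n) \to 0$. I would argue by contradiction: assuming $P(C_n) \geq \varepsilon > 0$ for all $n$, use inner regularity of the finite-dimensional Borel measures $\nu_{x_1, \ldots, x_{m_n}}$ on $(\mathbb{R}^n)^{m_n}$ (they are Radon measures on a Polish space) to replace each $C_n$ by a cylinder $K_n \subseteq C_n$ whose base is compact and with $P(C_n \setminus K_n) \leq \varepsilon / 2^{n+1}$. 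Then $P\bigl(\bigcap_{i \leq n} K_i\bigr) \geq \varepsilon/2$, so each finite intersection is nonempty, and a Cantor diagonal extraction on the compact bases produces a point $\omega^\ast \in \bigcap_n K_n \subseteq \bigcap_n C_n$, contradicting $\bigcap_n C_n = \emptyset$.

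The delicate point in the whole argument is precisely this tightness plus diagonal extraction step: it is where the hypothesis that each coordinate space is $\mathbb{R}^n$ (or more generally Polish) is essential, since on pathological measurable spaces projective families exist with no extension. After this, everything is bookkeeping: Carath\'eodory extension gives $P$ on $\mathcal{F}$, the coordinate maps $f_x$ are measurable by construction, and the identity $\nu_{x_1, \ldots, x_m}(B_1 \times \cdots \times B_m) = P[f_{x_1} \in B_1, \ldots, f_{x_m} \in B_m]$ holds on cylinders by definition and hence on all Borel rectangles by a standard $\pi$-$\lambda$ argument.
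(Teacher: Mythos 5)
The paper does not prove this statement at all: it is quoted as a classical result, stated explicitly ``without a proof'' with a pointer to the standard references (\cite{oksendalbook, bremaud2020probability}), so there is no in-paper argument to compare against. Your outline is the standard textbook proof that those references give, and it is essentially sound: the canonical sample space $(\mathbb{R}^n)^{\mathcal{X}}$ with coordinate projections, well-definedness and finite additivity of $P$ on the cylinder algebra from consistency conditions (1) and (2), inner regularity of Borel measures on $(\mathbb{R}^n)^{m}$ to get compact approximating bases, continuity from above at $\emptyset$ via the finite-intersection property of those compacta, and Carath\'eodory to finish. You also correctly identify the tightness-plus-compactness step as the only substantive obstacle and as the place where the topology of $\mathbb{R}^n$ (Polishness) enters. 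The one point worth writing out more carefully in a full proof is the extraction of $\omega^\ast$: the cylinders $K_i$ are based over different finite index sets, so you should first arrange a common increasing exhaustion of indices, pick $\omega_j \in \bigcap_{i\le j} K_i$, diagonally extract convergent coordinate subsequences index by index, and then extend the resulting partial limit arbitrarily to all of $\mathcal{X}$ to obtain a genuine element of $\Omega$ lying in every $K_i$. That is bookkeeping, not a gap in the idea.
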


\begin{remark}
    In our case, we obviously have $n=1$.\newline
\end{remark}

\begin{corollary}
    Let $f$ denote a Mixture of $k$ Gaussian Processes $f^{(1)},...,f^{(k)}$ and denote by $\nu^{f}_{x_1,...,x_m}$ the corresponding probability measure induced by $p(f_{x_1},...,f_{x_m})$ with respect to Lebesgue-Measure. Then, $\nu^{f}_{x_1,...,x_m}$ fulfills both conditions for the Kolmogorov extension theorem. \newline
\end{corollary}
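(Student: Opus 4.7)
The plan is to exploit the observation made in equation (17), namely that any integration (and hence any measure evaluation) against a mixture density decomposes as a convex combination of the corresponding quantities for the component densities. Since the finite-dimensional marginals of the mixture process are by definition mixtures of the finite-dimensional Gaussian marginals of $f^{(1)},\dots,f^{(k)}$, the induced measure factors as
\begin{equation*}
    \nu^{f}_{x_1,\dots,x_m}(B_1\times\cdots\times B_m)=\sum_{i=1}^{k}\pi_i\,\nu^{f^{(i)}}_{x_1,\dots,x_m}(B_1\times\cdots\times B_m),
\end{equation*}
where each $\nu^{f^{(i)}}$ is the measure on $\mathbb{R}^m$ induced by the $m$-dimensional Gaussian marginal of the $i$-th component GP.

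First I would verify condition~1 (permutation invariance). Since each $f^{(i)}$ is a bona fide Gaussian Process, the extension theorem applies to it, so $\nu^{f^{(i)}}_{x_{\sigma(1)},\dots,x_{\sigma(m)}}(B_{\sigma(1)}\times\cdots\times B_{\sigma(m)})=\nu^{f^{(i)}}_{x_1,\dots,x_m}(B_1\times\cdots\times B_m)$ holds for every component. Taking the weighted sum $\sum_i \pi_i(\cdot)$ on both sides then yields the corresponding identity for $\nu^{f}$.

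Next I would verify condition~2 (consistency under marginalization) in exactly the same fashion: for each component GP we have
\begin{equation*}
    \nu^{f^{(i)}}_{x_1,\dots,x_m}(B_1\times\cdots\times B_m)=\nu^{f^{(i)}}_{x_1,\dots,x_{m+k}}(B_1\times\cdots\times B_m\times\mathbb{R}\times\cdots\times\mathbb{R}),
\end{equation*}
and summing against the weights $\pi_i$ transfers the identity to the mixture measure $\nu^{f}$. Note that here I use the remark that $n=1$ in our setting, which avoids any notational collision between the ambient dimension and the number of marginalized coordinates.

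Finally, I would briefly observe that the sum $\sum_i \pi_i \nu^{f^{(i)}}$ is itself a probability measure (a convex combination of probability measures, with weights summing to one), so it is a legitimate input to the extension theorem. There is no real obstacle in this argument beyond this bookkeeping; the content of the corollary is essentially that both Kolmogorov conditions are affine identities in the measure, and hence stable under convex combinations — which is why the result was previously folklore and is included here mainly for completeness.
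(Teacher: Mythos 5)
Your argument is correct and follows essentially the same route as the paper's proof: decompose $\nu^{f}_{x_1,\dots,x_m}$ into the convex combination $\sum_i \pi_i \nu^{f^{(i)}}_{x_1,\dots,x_m}$ of the component Gaussian measures (via linearity of the integral, i.e.\ equation \eqref{mixturenintegral}), and then transfer both Kolmogorov conditions from the components, for which they hold since each $f^{(i)}$ is a genuine Gaussian Process. Your write-up is merely a little more explicit in checking the two conditions separately; there is no substantive difference.
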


A proof to this and all upcoming statements can be found in the appendix. It is obvious that such a process can only be unique up to swapping of its mixture components as for example discussed in \cite{mixgpidentifiable} for clustering with Mixtures of Gaussians.

Regarding actual applications though, it is by far more interesting to obtain results on conditional and marginal distributions of MGPs. As it turns out the commonly marginalization and conditioning properties for Mulitvariate Normal distributions hold naturally in the mixture case:\newline

\begin{lemma}\label{sigmaplus}
    Let $X=[X_A^T,X_B^T]^T$ denote a vector of random variables, obtained by stacking random variable vectors $X_A,X_B$ whose joint distribution is a Mixture of $k$ Multivariate Normal distributions
    
    \begin{equation}\label{mixturenormalstack}
        p(x)=p\left(\begin{bmatrix}x_A \\ x_B\end{bmatrix}\right)=\sum_{i=1}^k \pi_i \mathcal{N}\left(\begin{bmatrix}x_A \\ x_B\end{bmatrix}\bigg\vert\begin{bmatrix}\mu_{i,A} \\ \mu_{i,B}\end{bmatrix}, \begin{bmatrix}
                                                         \Sigma_{i,A} & \Sigma_{i,AB}^T  \\
                                                         \Sigma_{i,AB} & \Sigma_{i,B}  \\
                                                        \end{bmatrix}\right).
    \end{equation}
    
    Then, 
    \begin{enumerate}
        \item the \textbf{marginal distribution} of $X_A$ is a Mixture of $k$ Multivariate Normal distributions with density
    
            \begin{equation}\label{mixturenormalmarginal}
                p(x_A)=\sum_{i=1}^k \pi_i \mathcal{N}(x_A|\mu_{i,A}, \Sigma_{i,A}),
            \end{equation}
            
        \item the \textbf{conditional distribution} of $X_A$ given $X_B$ is a Mixture of $k$ Multivariate Normal distributions with density
    
            \begin{equation}\label{mixturenormalconditional}
                p(x_A|x_B)=\sum_{i=1}^k \frac{\pi_i \mathcal{N}(x_B|\mu_{i,B},\Sigma_{i,B})}{\sum_{j=1}^k \pi_j \mathcal{N}(x_B|\mu_{j,B},\Sigma_{j,B})}\mathcal{N}(x_A|\mu_{i,A|B},\Sigma_{i,A|B}).
            \end{equation}
            
            with 
            
            $$\mu_{i,A|B}= \mu_{i,A}+\Sigma_{i,AB}\Sigma_{i,B}^{-1}(x_B - \mu_{i,B})$$
            $$\Sigma_{i,A|B}=\Sigma_{i,A}-\Sigma_{i,AB}\Sigma_{i,B}^{-1}\Sigma_{i,AB}^T$$\newline
    \end{enumerate}

\end{lemma}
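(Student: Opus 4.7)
The plan is to prove both claims directly from the mixture density (\ref{mixturenormalstack}) using (\ref{mixturenintegral}) together with the standard marginalization and conditioning identities for a single multivariate normal. The whole argument reduces to a bookkeeping exercise once the sum and the integral have been swapped.

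For the marginal, I would write $p(x_A)=\int p(x_A,x_B)\,dx_B$, substitute (\ref{mixturenormalstack}), and then apply (\ref{mixturenintegral}) to interchange the finite sum over $i$ with the integral over $x_B$. This leaves a weighted sum of integrals of single Gaussian densities, each of which marginalizes to $\mathcal{N}(x_A\mid\mu_{i,A},\Sigma_{i,A})$ by the standard MVN marginalization result. Collecting terms immediately gives (\ref{mixturenormalmarginal}).

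For the conditional, I would apply Bayes' rule in the form $p(x_A\mid x_B)=p(x_A,x_B)/p(x_B)$. The denominator is exactly the marginal just derived, namely $\sum_{j=1}^k\pi_j\mathcal{N}(x_B\mid\mu_{j,B},\Sigma_{j,B})$. For the numerator, I factor each mixture component using the single-Gaussian identity
$$\mathcal{N}\!\left(\begin{bmatrix}x_A\\x_B\end{bmatrix}\,\bigg|\,\mu_i,\Sigma_i\right)=\mathcal{N}(x_B\mid\mu_{i,B},\Sigma_{i,B})\,\mathcal{N}(x_A\mid\mu_{i,A\mid B},\Sigma_{i,A\mid B}),$$
where $\mu_{i,A\mid B}$ and $\Sigma_{i,A\mid B}$ are the usual conditional mean and covariance from the standard MVN conditioning formula (which is the per-component version of what is stated in the lemma). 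Pulling the $\mathcal{N}(x_A\mid\mu_{i,A\mid B},\Sigma_{i,A\mid B})$ factor out of each summand and dividing by the denominator gives the form (\ref{mixturenormalconditional}).

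I do not expect any real obstacle here beyond notation. The one substantive point I would make explicit is that the coefficients $\pi_i\mathcal{N}(x_B\mid\mu_{i,B},\Sigma_{i,B})\big/\sum_j\pi_j\mathcal{N}(x_B\mid\mu_{j,B},\Sigma_{j,B})$ are nonnegative and sum to one for each fixed $x_B$, so (\ref{mixturenormalconditional}) is a genuine mixture density whose weights are the posterior responsibilities of the components given the observation $x_B$. This also clarifies that the conditional is again a $k$-component Gaussian mixture, which is what the lemma asserts.
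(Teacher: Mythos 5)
Your proposal is correct and follows essentially the same route as the paper's own proof: marginalize componentwise after swapping the sum and the integral, and for the conditional, factor each joint Gaussian component into $\mathcal{N}(x_B\mid\mu_{i,B},\Sigma_{i,B})\,\mathcal{N}(x_A\mid\mu_{i,A|B},\Sigma_{i,A|B})$ and divide by the marginal of $x_B$. Your added observation that the resulting weights are nonnegative responsibilities summing to one is a nice clarification but does not change the argument.
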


A crucial observation regarding the first part is that a univariate marginal distribution of a Mixture of (multivariate) Gaussians equals a mixture of the corresponding univariate marginals of the components. This is particularly useful when we assume i.i.d observations as is the next result: \newline

\begin{lemma}
    Let $X\sim \mathcal{MXN}(\pi_1,...,\pi_k;\mu_1,...,\mu_k;\Sigma_1,...,\Sigma_k), Y\sim\mathcal{N}(0,I\sigma^2)$ and let 
    
    $$Z=X + Y.$$
    
    Then it follows that
    
    $$Z\sim \mathcal{MXN}(\pi_1,...,\pi_k;\mu_1,...,\mu_k;\Sigma_1+I\sigma^2,...,\Sigma_k+I\sigma^2)$$\newline
\end{lemma}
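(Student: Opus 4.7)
The plan is to compute the distribution of $Z = X + Y$ directly by convolution, exploiting the same sum-integral interchange that was already highlighted in equation \eqref{mixturenintegral}. Since $X$ and $Y$ are independent, the density of $Z$ is given by the convolution
\begin{equation*}
p_Z(z) = \int p_X(x)\, p_Y(z-x)\, dx = \int \left(\sum_{i=1}^k \pi_i\, \mathcal{N}(x \mid \mu_i, \Sigma_i)\right) \mathcal{N}(z-x \mid 0, I\sigma^2)\, dx.
\end{equation*}
Pulling the finite sum outside the integral (this is the mixture-linearity observation from \eqref{mixturenintegral}, and is legal because all integrands are non-negative, so Tonelli applies), I reduce the problem to computing $k$ independent Gaussian convolutions
\begin{equation*}
p_Z(z) = \sum_{i=1}^k \pi_i \int \mathcal{N}(x \mid \mu_i, \Sigma_i)\, \mathcal{N}(z-x \mid 0, I\sigma^2)\, dx.
\end{equation*}

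Next, I invoke the standard closure of Gaussians under convolution: the sum of two independent multivariate normals with means $\mu_i, 0$ and covariances $\Sigma_i, I\sigma^2$ is again normal, with mean $\mu_i$ and covariance $\Sigma_i + I\sigma^2$. This collapses each component integral to $\mathcal{N}(z \mid \mu_i, \Sigma_i + I\sigma^2)$, so
\begin{equation*}
p_Z(z) = \sum_{i=1}^k \pi_i\, \mathcal{N}(z \mid \mu_i, \Sigma_i + I\sigma^2),
\end{equation*}
which is precisely the density of $\mathcal{MXN}(\pi_1,\dots,\pi_k;\mu_1,\dots,\mu_k;\Sigma_1 + I\sigma^2,\dots,\Sigma_k + I\sigma^2)$, as claimed.

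As a sanity check, I would redo the derivation at the level of characteristic functions: $\phi_X(t) = \sum_i \pi_i \exp(it^T\mu_i - \tfrac{1}{2}t^T\Sigma_i t)$ and $\phi_Y(t) = \exp(-\tfrac{1}{2}\sigma^2 t^Tt)$, so by independence $\phi_Z(t) = \phi_X(t)\phi_Y(t) = \sum_i \pi_i \exp(it^T\mu_i - \tfrac{1}{2}t^T(\Sigma_i + I\sigma^2)t)$, which matches the target mixture's characteristic function and rules out any convex-combination mis-identification. There is no real obstacle here; the only subtlety worth noting explicitly in the write-up is justifying the sum-integral swap (trivial because $k$ is finite and the mixture weights are non-negative), and the fact that the result hinges essentially only on the well-known Gaussian-plus-Gaussian convolution identity applied componentwise.
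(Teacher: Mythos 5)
Your proof is correct, but it takes a different route from the paper. The paper argues at the level of moment generating functions: it writes $M_Z(t)=\mathbb{E}[e^{t^T(X+Y)}]$, uses independence and the linearity of the mixture sum to factor this as $\sum_i \pi_i M_Y(t)M_{i,X}(t)$, multiplies the Gaussian MGFs componentwise, and then appeals to uniqueness of the MGF to identify the resulting distribution. You instead work directly with densities, computing the convolution $p_Z(z)=\int p_X(x)p_Y(z-x)\,dx$, pulling the finite sum outside (correctly justified via Tonelli), and invoking closure of Gaussians under convolution componentwise; your characteristic-function sanity check is essentially the paper's argument in disguise (with characteristic functions in place of MGFs). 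The two approaches are of comparable difficulty and both hinge on the same two facts — linearity of integration over the finite mixture and Gaussian additivity. Your density-level argument has the mild advantage of producing the target density directly without needing a uniqueness theorem for transforms, whereas the transform route avoids writing down a convolution integral. One incidental observation: your transform expressions carry the correct factors of $\tfrac{1}{2}$ in the exponents, whereas the paper's displayed MGFs ($e^{t^T I\sigma^2 t}$ and $e^{t^T\mu_i+t^T\Sigma_i t}$) omit them; this is a typo in the paper that does not affect its argument, but your version is the properly normalized one.
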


\subsection{Plain MGP regression}\label{naivemgp}

Using the results from \ref{generalresults}, we can set up our MGP regression model as follows:

\begin{equation}\label{mgp_model}
    y=f(x)+\epsilon,
\end{equation}
$$f\sim\mathcal{MGP}(\pi_1,...,\pi_k;m_1(\cdot),...,m_k(\cdot);k_1(\cdot,\cdot),...,k_k(\cdot,\cdot)),\quad \epsilon\sim\mathcal{N}(0,\sigma^2)$$

It is straightforward to see from $\ref{sigmaplus}$ that integrating out $f$ in \eqref{mgp_model} results in (supressing $x$ from our notation for clarity) 

\begin{equation}\label{intoutf}
    p(y) = \mathcal{MGP}(y|\pi_1,...,\pi_k;m_1(\cdot),...,m_k(\cdot); k_1(\cdot,\cdot)+\sigma^2,...,k_k(\cdot,\cdot)+\sigma^2)
\end{equation}

In conjunction with plain GP regression we can establish the following distributional relation between training inputs and outputs $X,y$ and function evaluations $f_*$ at new inputs $X_*$:

\begin{equation}\label{mgpdistrelation}
    p(y,f_*|X,X_*) = \sum_{i=1}^k \pi_i \mathcal{N}\left(\begin{bmatrix}y \\ f_* \end{bmatrix}\bigg\vert\begin{bmatrix}m_i(X) \\ m_i(X_*)\end{bmatrix}, \begin{bmatrix}
                                                         K_{i,nn}+I\sigma^2 & K_{i,*n}  \\
                                                         K_{i,n*} & K_{i,**}  \\
                                                        \end{bmatrix}\right)
\end{equation}

with $K_{i,nn}$ the $i$-th kernel Gram-matrix evaluated at $X$, $K_{i,**}$ the corresponding Gram-matrix at $X_*$ and $K_{i,n*}$ the cross-evaluation at $(X,X^*)$; $K_{i,*n}=K_{i,n*}^T$ and $m_i(\cdot)$ element-wise evaluation of the mean function. From \eqref{mixturenormalconditional} we can then derive the conditional Mixture of Gaussians:

\begin{equation}\label{conditionalfuncs}
    p(f_*|X,y,X_*) = \sum_{i=1}^k \frac{\pi_i \mathcal{N}(y|m_i(X),K_{i,nn}+I\sigma^2)}{\sum_{j=1}^k \pi_j \mathcal{N}(y|m_j(X),K_{j,nn}+I\sigma^2)}\mathcal{N}(f|m_{i|y}(X_*)_{|y},K_{i|y,**})
\end{equation}

where 

\begin{equation}\label{condfuncmean}
    m_{i|y}(X_*)=m_i(X_*) + K_{i,*n}(K_{i,nn}+I\sigma^2)^{-1}(y-m_i(X))
\end{equation}

\begin{equation}\label{condfunccov}
    K_{i|y,**}=K_{i,**}-K_{i,*n}(K_{i,nn}+I\sigma^2)^{-1}K_{i,n*}
\end{equation}

and finally to obtain predictions for $p(y^*|X,y,X^*)$, we simply need to add the observation variance $I \sigma^2$ to $K_{i|y,**}$ in accordance with Lemma 2.

The hyperparameters of the mean and kernel functions and the observation variance can be learned as usual, either by maximizing the marginal model log-likelihood given some training data or via full Bayesian inference. The Maximium Likelihood objective can then be seen to be

\begin{equation}\label{maxlikeobj}
    \sigma^*;\theta^*=\argmax_{\sigma;\theta} p(y|X)= \argmax_{\pi_1,...,\pi_k,\sigma;\theta} \sum_{i=1}^k \pi_i \mathcal{N}(y|m_i(X),K_{i,nn}+I\sigma^2).
\end{equation}

$$s.t. \sigma>0,\quad \theta\in\Theta$$

where $\theta$ denotes additional hyperparameters and $\Theta$ the set admissible values for $\theta$. 

For the mixture coefficients $\pi_i$, three different options arise:

\begin{enumerate}
    \item \textbf{Fix the $\pi_i$ ex-ante based on confidence in the prior distributions}: This option is commonly seen in the literature about linear prior pooling. Provided that confidence in the prior distributions is uniform across all components, $\pi_i=\pi=\frac{1}{k}$ would be a reasonable choice. On the other hand, assigning a large piece of probability mass to a 'poor' prior distribution could also negatively affect model performance. The corresponding objective is then the Maximum Likelihood case as in \eqref{maxlikeobj}.
    \item \textbf{Learn the $\pi_i$ together with the hyperparameters using standard Maximum Likelihood}: While this method seems naturally appealing from a Machine Learning, data-driven perspective, we more or less lose the ability to express individual confidence in each prior distribution. This would contradict our goal for this paper and hence, we will not consider it any further.
    \item \textbf{Place a prior distribution over the $\pi_i$, for example a Dirichlet distribution}: Using a Dirichlet-like prior distribution for the mixture coefficients is fairly popular in standard Gaussian Mixture models, see for example \cite{infinitedirichlets}. Contrary to the standard case however, we don't assign the Dirichlet prior on the probability parameters of a latent Multinomial distribution but directly on the mixture weights. We will consider this approach in the SV-MGP model later on.
\end{enumerate}

Since we don't assume a latent multinomial selection process as is the case for standard mixture models, we don't apply the usual EM-algorithm for Maximum Likelihood optimization. Instead, we apply plain gradient descent optimization which, as we will see, yields reasonable empirical results. 

So far, we have included the mean functions $m_i(\cdot)$ in all our derivations. It is however common practice to set $m_i(x)=0$ for all $x\in\mathcal{X}$ which makes some calculations less complex. The flexibility of the kernel function is usually sufficient to express a rich range of possible posterior functions. Nevertheless, we should account for additional information about the mean function when they are available.

\subsection{Sparse variational MGP regression}

The most obvious flaw that the naive regression formula in \eqref{conditionalfuncs} has in common its GP counterpart is the incapability to deal with larger datasets. Provided we were using the plain implementation, performance would be even worse than usual as we now need to invert $k$ Gram-matrices. This is certainly an issue for large-scale problems that needs to be solved in order for our method to be applicable for problems beyond 'toy-problem' sized datasets. Inspired by the existing literature about sparse GPs, we now derive a sparse variational variant of MGPs, SV-MGPs.
Ideally we want to obtain a similar optimization objective for as in \eqref{standardelbo} which turns out to be feasible.

We start by putting a Dirichlet prior over the mixture coefficients:

\begin{equation}\label{dircoeff}
    p(\pi_1,...,\pi_k)=p(\pi) = \mathcal{D}ir(\pi|\alpha)
\end{equation}

with $\alpha$ the parameter vector $\alpha_1,...,\alpha_k$. The usage of such prior begs the additional advantage of allowing to express uncertainty about the GP priors in a fully Bayesian manner. 
In order to show this, we will use the following result:\newline

\begin{lemma}
    Let $p(x)=\mathcal{MXN}(x|\pi_1,...,\pi_k;\mu_1,...,\mu_k;\Sigma_1,...,\Sigma_k)$. Then there exists a decomposition
    
    \begin{equation}\label{mixdecomp}
        p(x)=\int_{z_1,...,z_k}p(x|z_1,...,z_k)\prod_{i=1}^k \mathcal{N}(z_i|\mu_i,\Sigma_i)dz_1\cdots dz_k.
    \end{equation}
    
\end{lemma}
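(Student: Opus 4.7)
The plan is to explicitly construct a conditional distribution $p(x \mid z_1,\ldots,z_k)$ that makes the right-hand side collapse to the mixture on the left. The natural candidate is the degenerate mixture
\[
p(x \mid z_1,\ldots,z_k) = \sum_{i=1}^k \pi_i\, \delta(x - z_i),
\]
which is a valid probability density (in the distributional sense) since $\pi_i \geq 0$ and $\sum_i \pi_i = 1$. Interpretatively, this says: given independent draws $z_i \sim \mathcal{N}(\mu_i,\Sigma_i)$ from each component, pick one of them with probability $\pi_i$ and return it.

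With this choice, I would substitute into the right-hand side of \eqref{mixdecomp}, pull the finite sum out of the integral by linearity, and then for each term $i$ use Fubini to separate the integral over $z_i$ from the integrals over $\{z_j\}_{j \neq i}$. Each of the latter integrals equals $1$ because $\mathcal{N}(z_j \mid \mu_j,\Sigma_j)$ is a proper density, while the integral over $z_i$ reduces by the sifting property of the Dirac delta to $\mathcal{N}(x \mid \mu_i,\Sigma_i)$. Reassembling gives $\sum_{i=1}^k \pi_i \mathcal{N}(x \mid \mu_i,\Sigma_i)$, which is $p(x)$.

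The only mild subtlety is that $p(x \mid z_1,\ldots,z_k)$ is not an absolutely continuous density but a mixture of Diracs, so strictly speaking the decomposition should be stated in terms of a conditional probability measure rather than a Lebesgue density. I would either (i) present the argument at the distributional level, noting that the Dirac sifting manipulation is justified because the Gaussian factors are smooth and integrable, or (ii) replace the Dirac with a Gaussian of vanishing variance and pass to the limit. Either way the computation is a one-line verification, so the main obstacle is simply identifying the correct conditional kernel; once that is on the page, the rest is immediate.
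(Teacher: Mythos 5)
Your proposal is correct and follows essentially the same route as the paper's own proof: both construct the conditional kernel $p(x\mid z_1,\ldots,z_k)=\sum_{i=1}^k \pi_i\,\delta(x-z_i)$ and verify the identity by linearity, factorization of the product of Gaussians, and the sifting property of the Dirac delta. Your added remark on the measure-theoretic status of the Dirac conditional is a point the paper glosses over, but the core argument is identical.
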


While we can derive a distribution for $p(x|z_1,...,z_k)$, it turns out that we only require \eqref{mixdecomp} to be a valid decomposition in order to derive an ELBO for SV-MGPs. In the manner of the original derivation of SVGPs in \cite{titsiassvgp}, we augment the original conditional distribution 

\begin{equation}\label{postpreddec}
    p(y_*|y)=\int p(y_*|f,\pi_1,...,\pi_k)p(f,\pi_1,...,\pi_k|y)df d\pi_1\cdots d\pi_k
\end{equation}

where we now treat the mixture coefficients $\pi$ as Bayesian posterior distributions. Hence, we assign a Dirichlet prior distribution as in \eqref{dircoeff} and a proper variational distribution. We now introduce additional GP function outputs $f_1,...,f_k$ and, additionally, corresponding inducing variables $f_{1_m},...,f_{k_m}$ - each at potentially different inducing locations. To prevent our notation from bloating, we summarize $f_1,...,f_k$ as $f_Z$, $f_{1_m},...,f_{k_m}$ as $f_{Z_m}$ and $\pi=\pi_1,...,\pi_k$ and accordingly, $df_Z = df_{1}\cdots df_{k},df_{Z_m} = df_{1_m}\cdots df_{k_m},d\pi=d\pi_1\cdots d\pi_k$. This leads to the following decomposition of the posterior distribution $p(f,\pi|y)$:

\begin{equation}\label{posteriordec}
    p(f,\pi|y) = \int p(f|f_Z,\pi)\cdot p(f_Z|f_{Z_m})p(f_{Z_m})\cdot p(\pi)\,df_Z df_{Z_m}
\end{equation}

Our interest now lies in finding a variational approximation of $p(f,f_Z,f_{Z_m},\pi|y)$, i.e. we aim to minimize the KL-divergence between posterior and its approximation: 

\begin{equation}\label{klsvmgp}
    KL\left(q(f,f_Z,f_{Z_m},\pi)||p(f,f_Z,f_{Z_m},\pi|y)\right)=\int q(f,f_Z,f_{Z_m},\pi)\log\frac{q(f,f_Z,f_{Z_m},\pi)}{p(f,f_Z,f_{Z_m},\pi|y)}df df_Z df_{Z_m}d\pi.
\end{equation}

Consider the following decomposition for the variational distribution:

\begin{equation}\label{variationaldecomp}
    q(f,f_Z,f_{Z_m},\pi)=p(f|f_Z,\pi)p(f_Z|f_{Z_m})q(f_{Z_m})q(\pi)
\end{equation}

where $q(f_{Z_m})=\prod_{i=1}^k q(f_{Z_i})$ is a set of $k$ independent sets of inducing variables of dimension $m$ and each element jointly Gaussian distributed as

$$q(f_{i_m})=\mathcal{N}(f_{i_m}|m_{f_i},S_{f_i}),\quad m_{f_i}\in\mathbb{R}^m, S_{f_i}=L_{f_i}^T L_{f_i}, L_{f_i}\in\mathbb{R}^{m\times m}$$

and $q(\pi)=\mathcal{D}ir(\pi|\tilde{\alpha})$ is a variational Dirichlet distribution, hence both prior and variational distribution for the mixture coefficients are Dirichlet. The decomposition of the prior distribution follows the same principle as its variational approximation: 

\begin{equation}\label{likelidecomp}
    p(y,f,f_Z,f_{Z_m},\pi)=p(y|f,\pi)p(f|f_Z,\pi)p(f_Z|f_{Z_m})p(f_{Z_m})p(\pi)
\end{equation}

where $p(f_{Z_m})=\prod_{i=1}^k p(f_{Z_i})$ the set of evaluations of $f_{1},...,f_{k}$ at inducing locations $X_{i,m}$

$$p(f_{i_m})=\mathcal{N}(f_{i_m}|m_{i,m}),K_{i,mm})$$

with $m_{i,m},K_{i,mm}$ the $i$-th mean and kernel functions evaluated over $X_{i,m}$. As an alternative, we could evaluate $f_{1},...,f_{k}$ over the same set of inducing locations. During our experiments however, we found this setup to produce better results though. 

Let, similar to \cite{titsiassvgp},

\begin{equation}
    p(f_i|f_{i_m}))=\mathcal{N}(f_i|K_{i,nm} K_{i,mm}^{-1}f_{i_m},K_{i,nn} - K_{i,nm} K_{i,mm}^{-1}K_{i,mn})
\end{equation}

which, given an arbitrary multivariate Gaussian with of feasible dimension $\mathcal{N}(f_{i_m}|m,S)$, results in

\begin{equation}\label{vargpmarg}
    \int p(f_i|f_{i_m}))\mathcal{N}(f_{i_m}|m,S) df_{i_m}=\mathcal{N}(f_i|K_{i,nm} K_{i,mm}^{-1}m,K_{i,nn} - K_{i,nm} K_{i,mm}^{-1}(K_{i,mm}-S)K_{i,mm}^{-1}K_{i,mn})
\end{equation}

where $K_{i,nn}$ denotes the $k_i(\cdot,\cdot)$ evaluated over inputs $X$, $K_{i,nm}$ the cross evaluation over $X$ and inducing inputs $X_{i,mm}$ and $K_{i,mn}=K_{i,nm}^T$. Finally, we have 

\begin{equation}
    \int p(f|f_Z,\pi)\mathcal{N}(f_Z|\mu_Z, \Sigma_Z)\mathcal{D}ir(\pi|\alpha)df_Z d\pi = MXN(f|\tilde{\pi},\mu_Z,\Sigma_Z)
\end{equation}

where $\tilde{\pi}_i=\frac{\alpha_i}{\sum_{j=1}^k \alpha_j}$ and we summarized $\mathcal{N}(f_Z|\mu_Z, \Sigma_Z)=\prod_{i=1}^k\mathcal{N}(f_i|\mu_i, \Sigma_i)$; $\mu_Z=\mu_1,...,\mu_k; df_Z=df_1\cdots df_k$ and $\Sigma_Z$ accordingly. Such $p(f|f_Z)$ exists according to \textbf{Lemma 3}.

In addition, these results imply the following:

\begin{equation}\label{variationalsvmgp}
    \begin{gathered}
            q(f)=\int p(f|f_Z)p(f_Z|f_{Z_m})p(f_{Z_m})q(\pi) df_Z df_{Z_m}d\pi\\
             =\int p(f|f_Z,\pi)\prod_{i=1}^k\mathcal{N}(f_i|K_{i,nm} K_{i,mm}^{-1}m_{f_i},K_{i,nn} - K_{i,nm} K_{i,mm}^{-1}(K_{i,mm}-S_{f_i})K_{i,mm}^{-1}K_{i,mn}) \mathcal{D}ir(\pi|\tilde{\alpha}) df_Z d\pi\\
             = MXN(\tilde{\pi}_1,...,\tilde{\pi}_k;\tilde{\mu}_1,...,\tilde{\mu}_k;\tilde{\Sigma}_1,...,\tilde{\Sigma}_k)
    \end{gathered}
\end{equation}

with $\tilde{\pi}_i=\frac{\alpha_i}{\sum_{i=1}^k \alpha-i},\tilde{\mu}_i=K_{i,nm} K_{i,mm}^{-1}m_{f_i},\tilde{\Sigma}_i=K_{i,nn} - K_{i,nm} K_{i,mm}^{-1}(K_{i,mm}-S_{f_i})K_{i,mm}^{-1}K_{i,mn})$.\\

As usual, we cannot minimize the KL-divergence in \eqref{klsvmgp} directly but rather need to maximize an ELBO derived thereof. Putting all this together, we obtain the following ELBO for SV-MGPs (a derivation can be found in Appendix \ref{svmgpelbo}):

\begin{equation}\label{svmgpelbo}
    \begin{gathered}
        ELBO =\sum_{j=1}^n\left\{\sum_{i=1}^k\pi_i\mathcal{N}(y_j|k_{i,j}^T K_{i,mm}^{-1}m_{f_i},\sigma^2)-\frac{1}{2\sigma^2}\tilde{k}_{i,(j,j)}
            -\frac{1}{2}tr\left(S_{f_i}\Lambda_{i,j}\right)\right\} \\
            -\sum_{i=1}^k KL(q(f_{i_m})||p(f_{i_m})) -KL(q(\pi)||p(\pi))
    \end{gathered}
\end{equation}

where $KL(q(\pi)||p(\pi))$ is the KL-divergence between two Dirichlet distributions $\mathcal{D}ir(\pi|\alpha),\mathcal{D}ir(\pi|\tilde{\alpha})$ whose closed form representation we'll also state in the appendix. Hence, the whole ELBO in \eqref{svmgpelbo} is available in closed form and we don't need to rely on any approximations. Predictions for new inputs $X_*$ can also be calculated analytically:

\begin{equation}\label{postpred}
    p(y_*) = \int p(y_*|f_*)q(f_*) df_* = \mathcal{MXN}(y_*|\tilde{\pi}_1,...,\tilde{\pi}_k;\tilde{\mu}_{1,*},...,\tilde{\mu}_{k,*};\tilde{\Sigma}_{1,*}+I\sigma^2,...,\tilde{\Sigma}_{k,*}+I\sigma^2)
\end{equation}

with $\tilde{\mu}_{i,*},\tilde{\Sigma}_{i,*}$ the evaluation at $X_*$ in correspondence to \eqref{variationalsvmgp}

\section{Related work and context}
Mixtures of Gaussian Processes have been considered in particular in conjunction with Mixtures of local Experts models \cite{originalmoes, twentymoes, surveymoes}. Contrary to our approach, the mixture coefficients $\pi_i$ in local expert models are usually varying as a function of the input space $\mathcal{X}$. Hence, the goal of mixtures of local GP experts is primarily the achievement of superior performance and/or faster computation in comparison to standard GPs \cite{mixgps,deepstrucmgps}. In that regard, there also exist approaches to put a Dirichlet Process prior on the mixture coefficients which results in infinite dimensional mixtures of GPs \cite{infinitedirichlets}. 

Another natural consideration is the treatment of multimodal functional data as in \cite{multimodalmgps, enrichedmixtures,nonparammixtures}. In that case, the mixture distribution is to be interpreted as a direct result of mixing a multinomially distributed selection of function outputs rather than a weighted average of prior densities or measures respectively. Under the multimodality assumption, we would require more than one of the GPs to fit the data which would manifest itself as the mixture coefficients not converging to one. Under our setup, if one of the prior GPs matches the underlying data generating process fairly well, we actually expect the coefficients to converge to one for that particular GP.

Regarding our variational inference approach, the work done by \cite{overlappingstuff} probably comes closest although their underlying assumption is again the multimodality or overlap of multiple functions as mentioned in \eqref{hierarchicalmix} rather than the question of how to aggregate multiple functional prior distributions. This assumption makes the authors use a multinomial prior over the selection of processes resulting in a variational approximation procedure different from ours.

\section{Practical considerations and experiments}
Besides the aggregation of multiple prior distributions that encode the knowledge or information from multiple sources, we want to stress the following use-case scenario of our method:

While our prior knowledge might describe the functional relations in a prediction task fairly well, there exists no guarantee that prior assumptions are completely wrong. As an illustrative example, consider the functional relation between two variables $x$ and $y$ to follow a simple quadratic formula plus random Gaussian noise:

\begin{equation}
    y=x^2 + \epsilon,\quad \epsilon\sim\mathcal{N}(0,\sigma^2).
\end{equation}

An expert might draw the erroneous conclusion that the target $y$ has a linear relationship with $x$, 

\begin{equation}
    y=x + \epsilon,\quad \epsilon\sim\mathcal{N}(0,\sigma^2),
\end{equation}

which would appear to be correct in particular if our training data contains only a few samples samples with realizations of $x$ closely around zero. If all prior probability mass lies on a hypothetical set of linear functions it is, in accordance with Bayesian asymptotics, impossible for the posterior distribution to correctly depict the underlying quadratic relationship. 

On the other hand, if there exists a strong indication for an actual linear relationship, this situation should be reflected in the chosen prior distribution. In order to nevertheless account for the possibility of a mis-specification we propose to include a very broad, functional GP prior distribution - expressed for example through a Squared Exponential Kernel - in the pool of prior distributions. This reflects an idea proposed in \cite{baysianopenmindedness} coined \textit{catch-all hypothesis}, i.e. the introduction of an alternative hypothesis that behaves orthogonally to the union of all other hypotheses. 

\begin{figure}
\centering
\begin{subfigure}{.5\textwidth}
  \centering
  \includegraphics[width=.95\linewidth]{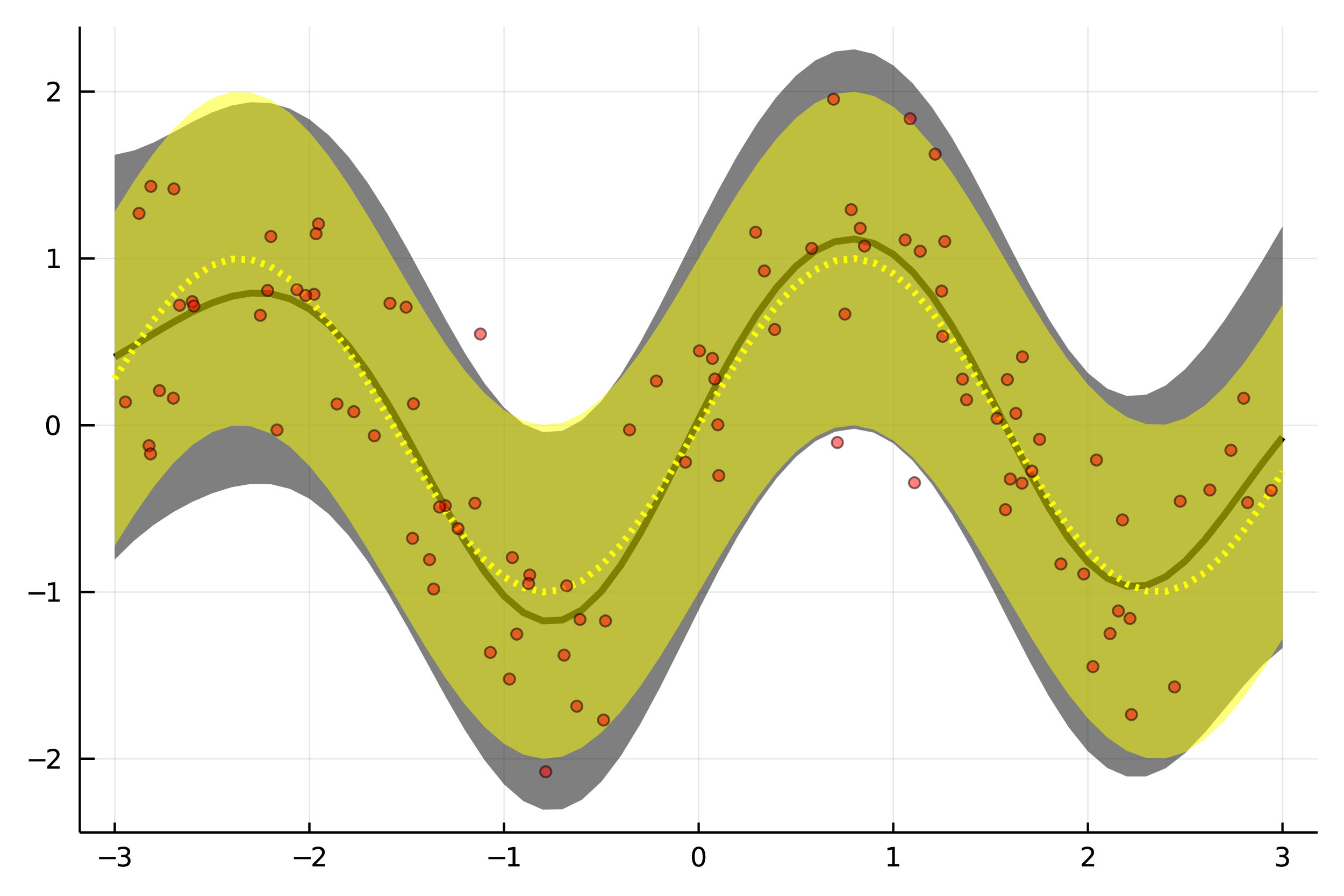}
  \caption{MGP using a mixture of a linear kernel and an SE-kernel\footnotemark}
\end{subfigure}%
\begin{subfigure}{.5\textwidth}
  \centering
  \includegraphics[width=.95\linewidth]{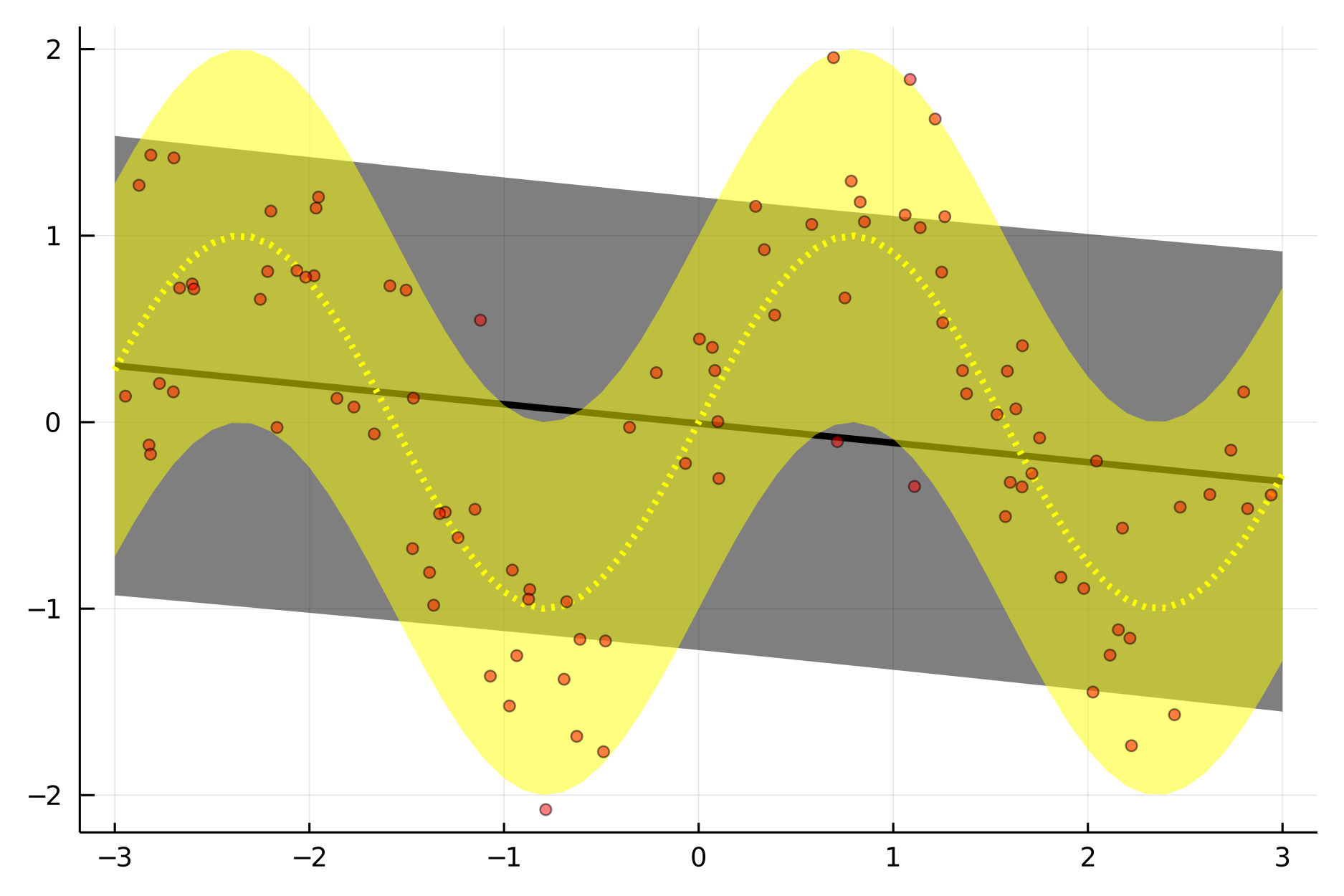}
  \caption{The same data using only a Linear Kernel}
\end{subfigure}
\caption{Plain MGP as a remedy for prior misspecification on data sampled from $y=sin(2x)+\epsilon$ - yellow line and area denote mean and 2 standard deviations of the data generating process, black line and area denote mean and 2 standard deviations of the posterior predictive process given datapoints (red dots)}. 
\label{fig:priormis}
\end{figure}

Since finding an actual orthogonal complement to our set of functional hypotheses poses an additional challenge, we won't do so here but rather use the Square Exponential Kernel to express a 'complementary' functional prior distribution. As can be seen in \figref{fig:priormis}, mixing a misspecified prior - the presumed expert knowledge - with a fallback SE-kernel results in the model employing the SE-kernel to mitigate the misfit of the Linear kernel. This might particulary be helpful when the model is trained incrementally in an online learning problem. While a specific prior distribution might be helpful at the beginning of training, the fallback kernel could take over if the data turns out to not behave accordingly.

\begin{figure}
\centering
\begin{subfigure}{.5\textwidth}
  \centering
  \includegraphics[width=.95\linewidth]{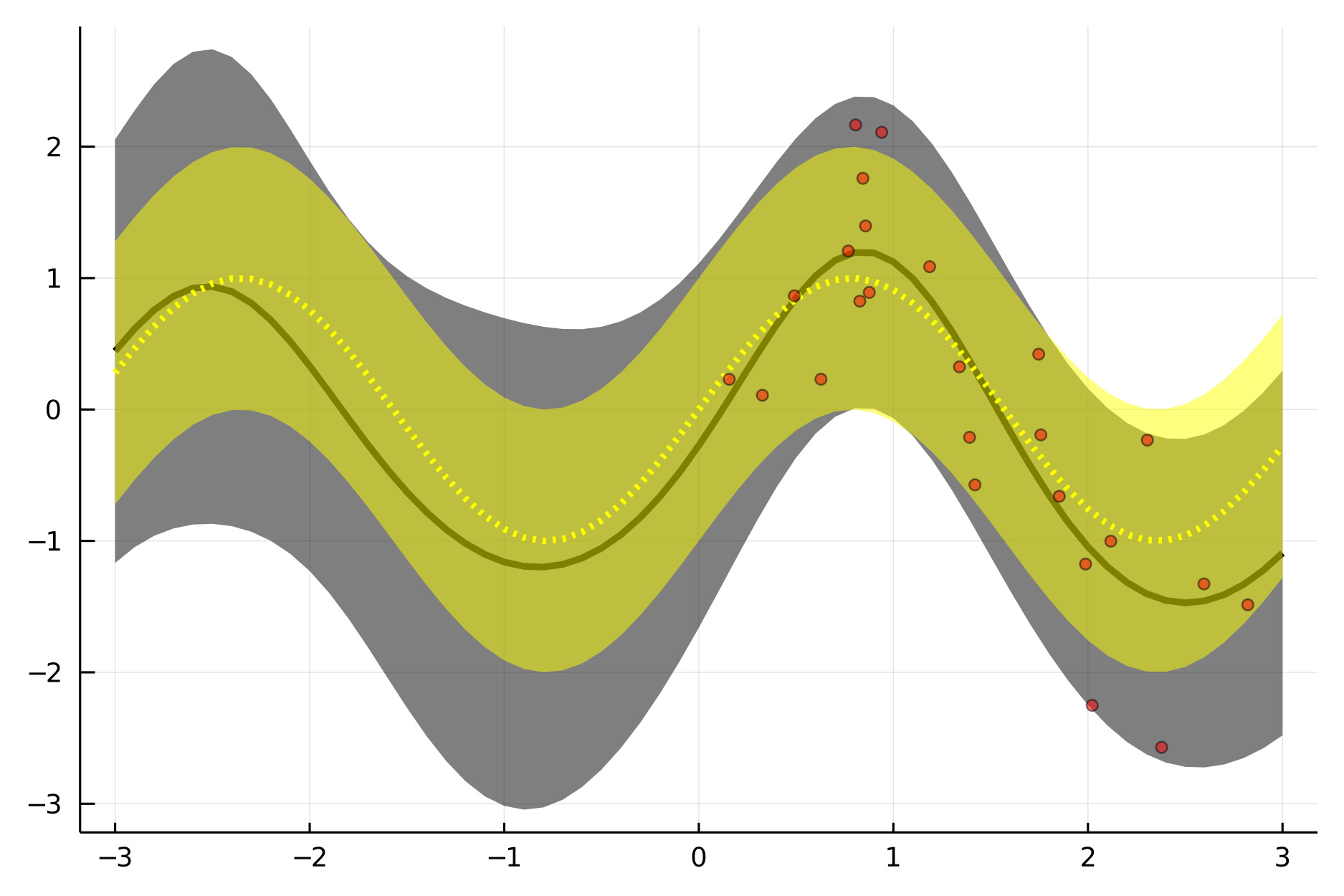}
  \caption{MGP using a mixture of a Periodic kernel and an SE-kernel\footnotemark}
\end{subfigure}%
\begin{subfigure}{.5\textwidth}
  \centering
  \includegraphics[width=.95\linewidth]{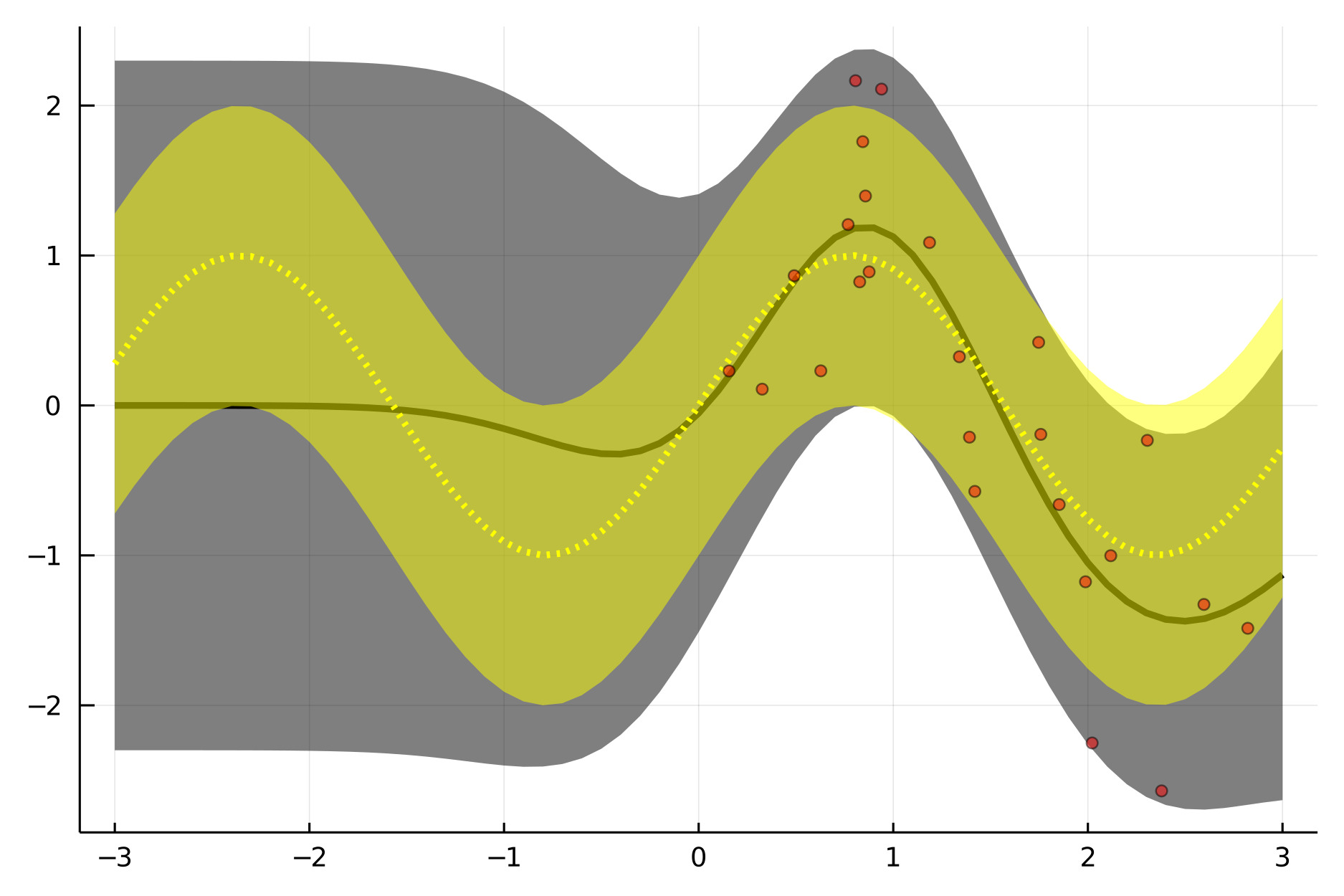}
  \caption{The same data using only an SE-kernel}
\end{subfigure}
\caption{Plain MGP for a small dataset sampled from $y=sin(2x)+\epsilon$ - yellow line and area denote mean and 2 standard deviations of the data generating process, black line and area denote mean and 2 standard deviations of the posterior predictive process given datapoints (red dots) 
}

\label{fig:smallmix}
\end{figure}

\begin{figure}
\centering
  \includegraphics[width=.5\linewidth]{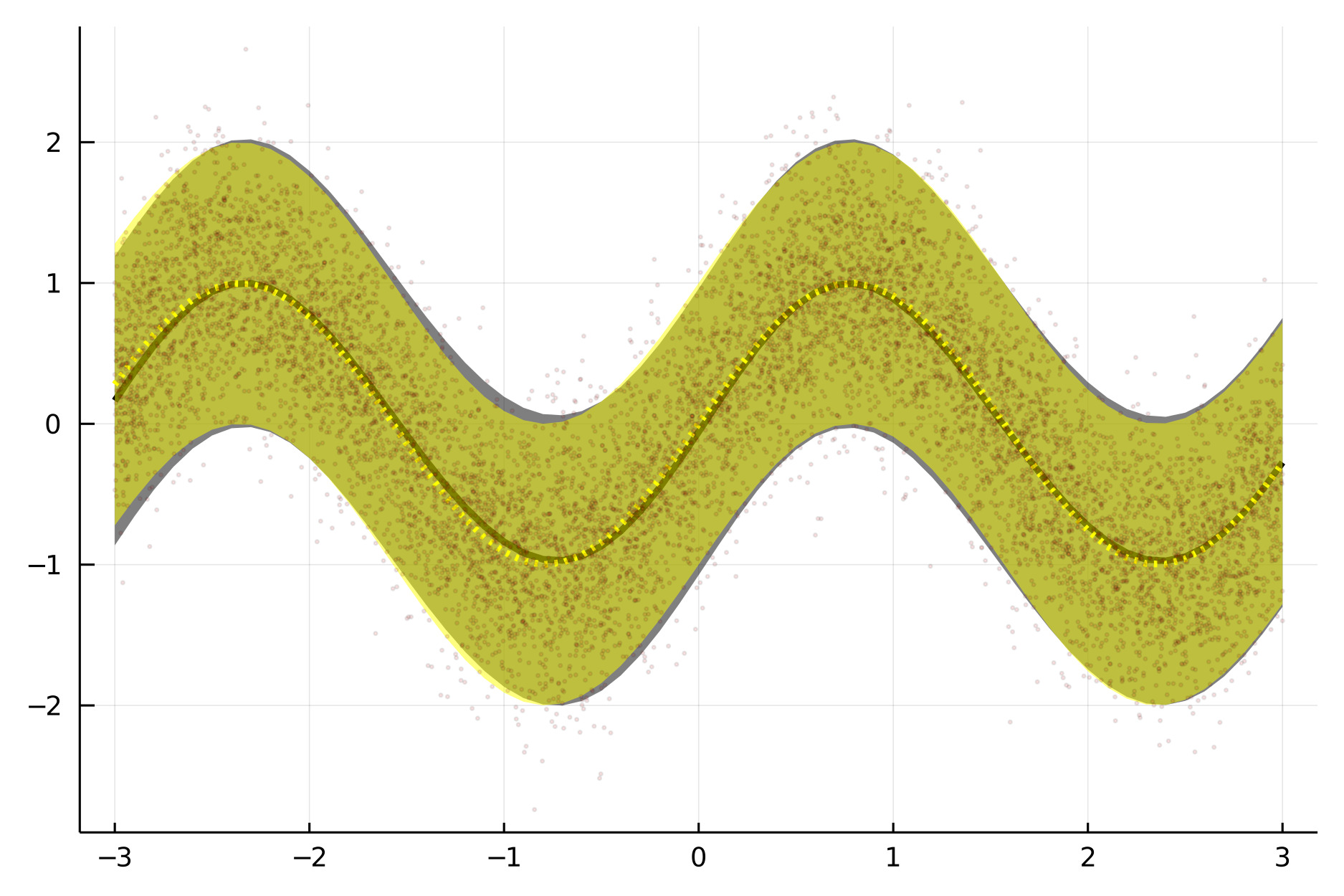}
  \label{fig:svmgp}
\caption{SVMGP posterior process for $N=10,000$ samples from $y=sin(2x)+\epsilon$ - yellow line and area denote mean and 2 standard deviations of the data generating process, black line and area denote mean and 2 standard deviations of the posterior predictive process given datapoints (red dots) 
}

\label{fig:svmgp}
\end{figure}

\footnotetext{both prior GPs were assigned a prior mixture coefficient of 0.5}
\footnotetext{we used the following prior weights: \textbf{Periodic kernel} - 0.8, \textbf{SE -kernel} - 0.2}

In that regard, \figref{fig:smallmix} shows how the 'correct' choice of a specific prior distribution - here, a Periodic kernel for periodic data - could potentially help with the problem of sparsely available data in some regions of the input domain. Here, our model is able to use the prior information provided by the periodic kernel in order to provide sensible predictions about the unobserved, negative, section of the x-axis. This is in contrast to the overly general prior distribution implied by the SE-kernel which is unable to correctly capture the periodicity inherent in the data.

Finally we tested the SV-MGP version discussed before by sampling 10,000 datapoints from $y=sin(2x)+\epsilon$. We used three GPs (linear kernel, periodic kernel, SE-kernel) as mixture components and put a $Dir(3,3,3)$ prior on the mixture coefficients and used 3 inducing points per process. The corresponding (variational) posterior weights are $(0.0006,0.996,0.0034)$. As can be seen in \figref{fig:svmgp}, the resulting posterior predictive process roughly matches the data generating process, i.e. our model empirically converged to ground truth a sufficiently sized dataset. 

Obviously, the experiments in this section are all conducted on a simulated toy dataset. Since we were primarily interested in the theoretical implications of our method, namely mixing multiple prior distributions and adding a fallback prior, we restrained from more sophisticated experiments. One apparent next step is conducting more realistic experiments on real-world datasets.

\section{Discussion}
We have considered MGPs as a potential approach to pool multiple GP prior distributions from different sources in a sound manner. For posterior inference, we have developed both an analytical and a variational solution and demonstrated their applicability on simulated datasets. Extending the simulation to more realistic settings and data would be an interesting next step to consider in future work. 

As a concrete case that we would like to explore is the real-world applicability of our method for computer vision tasks, in particular for situations where training data is only sparsely available. Using GP prior distributions that encode meaningful expert or heuristic knowledge, we could test if our method is able to perform reasonably well when datasets are too small for complex Deep Learning architectures to be superior. This might be an interesting alternative to transfer learning for models in small data regimes, especially when there is no suitable transfer model available.

This is obviously just a hypothesis for now and we can expect the challenge to outperform Deep Learning solutions for learning under small datasets to be highly difficult with GPs. Nevertheless, the integration of meaningful prior knowledge into complex Machine Learning tasks is still a worthwhile topic as a sensible supplement to the mostly data-only learning of current state-of-the-art approaches.

%\bibliography{references}  %%% Remove comment to use the external .bib file (using bibtex).
%%% and comment out the ``thebibliography'' section.

%%% Comment out this section when you \bibliography{references} is enabled.
\vskip 0.2in
\bibliographystyle{unsrt}  
\bibliography{references}

\newpage
\begin{appendix}
\setcounter{lemma}{0}
\setcounter{theorem}{0}
\setcounter{corollary}{0}
\section{Proofs}
\begin{corollary}
    Let $f$ denote a Mixture of $k$ Gaussian Processes $f^{(1)},...,f^{(k)}$ and denote by $\nu^{f}_{x_1,...,x_m}$ the corresponding probability measure induced by $p(f_{x_1},...,f_{x_m})$ with respect to Lebesgue-Measure. Then, $\nu^{f}_{x_1,...,x_m}$ fulfills both conditions for the Kolmogorov extension theorem. 
\end{corollary}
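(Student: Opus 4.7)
The plan is to exploit the fact that the finite-dimensional measure $\nu^f_{x_1,\dots,x_m}$ is by construction a finite convex combination of the Gaussian measures associated with each component process, and that both Kolmogorov consistency conditions are preserved under such convex combinations. Concretely, by Definition~1 together with \eqref{mixturenintegral}, for any Borel rectangle $B_1 \times \cdots \times B_m$ one has
\begin{equation*}
\nu^f_{x_1,\dots,x_m}(B_1 \times \cdots \times B_m) \;=\; \sum_{i=1}^k \pi_i\, \nu^{f^{(i)}}_{x_1,\dots,x_m}(B_1 \times \cdots \times B_m),
\end{equation*}
where $\nu^{f^{(i)}}_{x_1,\dots,x_m}$ denotes the multivariate Gaussian measure at index $(x_1,\dots,x_m)$ induced by the $i$-th component GP.

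The next step is to recall that each $f^{(i)}$ is a genuine Gaussian Process, so its finite-dimensional Gaussian marginals already satisfy both Kolmogorov conditions by the standard symmetry and marginalization properties of multivariate Gaussians. To verify condition~(1) for the mixture I would apply the permutation inside the sum: for each fixed $i$, $\nu^{f^{(i)}}_{x_{\sigma(1)},\dots,x_{\sigma(m)}}(B_{\sigma(1)} \times \cdots \times B_{\sigma(m)}) = \nu^{f^{(i)}}_{x_1,\dots,x_m}(B_1 \times \cdots \times B_m)$, then pull the weighted sum back out to recover $\nu^f_{x_1,\dots,x_m}(B_1 \times \cdots \times B_m)$. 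For condition~(2) I would set the trailing factors $B_{m+1},\dots,B_{m+k}$ equal to $\mathbb{R}^n$ (with $n=1$ per the Remark), push the projection inside the sum using \eqref{mixturenintegral}, invoke Gaussian marginal consistency on each summand to collapse the added coordinates, and reassemble the sum into the original lower-dimensional mixture measure.

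The main, and essentially only, subtlety worth being explicit about is that the mixture weights $\pi_i$ are independent of the choice of index set $(x_1,\dots,x_m)$ -- this is baked into Definition~1, in the sense that the same $k$-tuple $(\pi_1,\dots,\pi_k)$ parametrizes every finite-dimensional marginal of $f$ -- and that $\sum_{i=1}^k \pi_i = 1$, so that the convex combination is again a probability measure rather than merely a finite signed measure. With these two bookkeeping facts noted, the proof reduces to two short calculations (one per Kolmogorov condition), and no genuine technical obstacle arises; the content of the corollary is really just that linearity of integration commutes with both permutation of coordinates and projection onto subsets of coordinates.
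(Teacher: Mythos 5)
Your proposal is correct and follows essentially the same route as the paper: decompose $\nu^f_{x_1,\dots,x_m}$ as the convex combination $\sum_i \pi_i \nu^{f^{(i)}}_{x_1,\dots,x_m}$ of the component Gaussian measures and reduce both Kolmogorov conditions to the known consistency of Gaussian process marginals. The only (harmless) difference is that you spell out the two conditions explicitly and flag that the weights $\pi_i$ do not depend on the index set, whereas the paper compresses this into a one-line ``it suffices to check the conditions for each $\nu^i$'' remark after obtaining the decomposition via Radon--Nikodym.
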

    
\begin{proof}

    By the Radon-Nikodym theorem, we have 
    
    $$\nu^{f}_{x_1,...,x_m}(B_1\times\cdots\times B_m)=\int_{B_1\times\cdots\times B_m}p(f_{x_1},...,f_{x_m})df_{x_1}\cdots df_{x_m}$$
    $$=\int_{B_1\times\cdots\times B_m}\sum_{i=1}^k \pi_i p_i(f_{x_1},...,f_{x_m})df_{x_1}\cdots df_{x_m}=\sum_{i=1}^k \pi_i \int_{B_1\times\cdots\times B_m} p_i(f_{x_1},...,f_{x_m})df_{x_1}\cdots df_{x_m}$$
    $$=\sum_{i=1}^k \pi_i \nu^i_{x_1,...,x_m}(B_1\times\cdots\times B_m)$$
    
    where $\nu^i_{x_1,...,x_m}$ denotes the measure induced by the $i$-th mixture component's density with respect to the Lebesgue-Measure. Since the last term depends on $x_1,...,x_m$ and $B_1\times\cdots\times B_m$ respectively solely via $\nu^i_{x_1,...,x_m}$, it suffices to show that the conditions for the Kolmogorov extension theorem are fulfilled by all $\nu^i_{x_1,...,x_m}$. This is indeed the case for Gaussian Processes (see for example \cite{bremaud2020probability} for a formal proof) and therefore the claim follows.\newline
\end{proof}

\textcolor{white}{asdfasdf}\newline

\begin{lemma}
    Let $X=[X_A^T,X_B^T]^T$ denote a vector of random variables, obtained by stacking random variable vectors $X_A,X_B$ whose joint distribution is a Mixture of $k$ Multivariate Gaussians
    
    \begin{equation}\label{mixturenormalstack}
        p(x)=p\left(\begin{bmatrix}x_A \\ x_B\end{bmatrix}\right)=\sum_{i=1}^k \pi_i \mathcal{N}\left(\begin{bmatrix}x_A \\ x_B\end{bmatrix}\bigg\vert\begin{bmatrix}\mu_{i,A} \\ \mu_{i,B}\end{bmatrix}, \begin{bmatrix}
                                                         \Sigma_{i,A} & \Sigma_{i,AB}^T  \\
                                                         \Sigma_{i,AB} & \Sigma_{i,B}  \\
                                                        \end{bmatrix}\right).
    \end{equation}
    
    Then, 
    \begin{enumerate}
        \item the \textbf{marginal distribution} of $X_A$ is a Mixture of $k$ Multivariate Gaussians with density
    
            \begin{equation}\label{mixturenormalmarginal}
                p(x_A)=\sum_{i=1}^k \pi_i \mathcal{N}(x_A|\mu_{i,A}, \Sigma_{i,A}),
            \end{equation}
            
        \item the \textbf{conditional distribution} of $X_A$ given $X_B$ is a Mixture of $k$ Multivariate Gaussians with density
    
            \begin{equation}\label{mixturenormalconditional}
                p(x_A|x_B)=\sum_{i=1}^k \frac{\pi_i \mathcal{N}(x_B|\mu_{i,B},\Sigma_{i,B})}{\sum_{j=1}^k \pi_j \mathcal{N}(x_B|\mu_{j,B},\Sigma_{j,B})}\mathcal{N}(x_A|\mu_{i,A|B},\Sigma_{i,A|B}).
            \end{equation}
            
            with 
            
            $$\mu_{i,A|B}= \mu_{i,A}+\Sigma_{i,AB}\Sigma_{i,B}^{-1}(x_B - \mu_{i,B})$$
            $$\Sigma_{i,A|B}=\Sigma_{i,A}-\Sigma_{i,AB}\Sigma_{i,B}^{-1}\Sigma_{i,AB}^T$$
    \end{enumerate}

\end{lemma}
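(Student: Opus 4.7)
The approach is to reduce both claims to the corresponding well-known facts for a single multivariate Gaussian, leveraging the linearity of integration exactly as in \eqref{mixturenintegral}.

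For part 1, the plan is to start from the definition of the marginal as
\[
p(x_A) = \int p\!\left(\begin{bmatrix}x_A\\x_B\end{bmatrix}\right)\,dx_B,
\]
substitute the mixture expression from \eqref{mixturenormalstack}, pull the sum outside the integral by linearity, and then invoke the standard marginalisation result for a single multivariate Gaussian on each component $\mathcal{N}(\cdot\mid[\mu_{i,A}^T,\mu_{i,B}^T]^T,\Sigma_i)$. This yields the claimed $\sum_i \pi_i \mathcal{N}(x_A\mid\mu_{i,A},\Sigma_{i,A})$ directly. No step here is subtle; it is just the computation in \eqref{mixturenintegral} applied to the Gaussian marginalisation formula.

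For part 2, the plan is to apply the definition of conditional density, $p(x_A\mid x_B) = p(x_A,x_B)/p(x_B)$. The numerator is the mixture density from \eqref{mixturenormalstack}, and the denominator is the marginal of $x_B$ obtained from part 1, namely $\sum_{j=1}^k \pi_j \mathcal{N}(x_B\mid\mu_{j,B},\Sigma_{j,B})$. The key manipulation is then to factor each joint Gaussian component using the standard decomposition
\[
\mathcal{N}\!\left(\begin{bmatrix}x_A\\x_B\end{bmatrix}\,\bigg\vert\,\begin{bmatrix}\mu_{i,A}\\ \mu_{i,B}\end{bmatrix},\Sigma_i\right) = \mathcal{N}(x_A\mid \mu_{i,A\mid B},\Sigma_{i,A\mid B})\,\mathcal{N}(x_B\mid\mu_{i,B},\Sigma_{i,B}),
\]
with the conditional mean and covariance as stated in the lemma. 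Substituting this into the numerator and combining with the denominator yields exactly the weights $\pi_i \mathcal{N}(x_B\mid\mu_{i,B},\Sigma_{i,B})/\sum_j \pi_j \mathcal{N}(x_B\mid\mu_{j,B},\Sigma_{j,B})$ in front of $\mathcal{N}(x_A\mid\mu_{i,A\mid B},\Sigma_{i,A\mid B})$.

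There is no real obstacle here; the only point worth flagging is that the mixture weights of the conditional distribution become data-dependent (they depend on $x_B$), which is the reason why a conditional of a Gaussian mixture is again a mixture but with reweighted, posterior-like coefficients. The proof is essentially a two-line calculation in each part, relying only on linearity of integration and the single-component Gaussian marginalisation/conditioning identities.
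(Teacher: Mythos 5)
Your proposal is correct and follows essentially the same route as the paper's own proof: part 1 by exchanging the sum and the marginalising integral and invoking componentwise Gaussian marginalisation, and part 2 by factoring each joint component as $\mathcal{N}(x_A\mid\mu_{i,A\mid B},\Sigma_{i,A\mid B})\,\mathcal{N}(x_B\mid\mu_{i,B},\Sigma_{i,B})$ and dividing by the marginal of $x_B$ from part 1. Your remark that the resulting mixture weights become $x_B$-dependent is a correct and worthwhile observation that the paper leaves implicit.
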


\begin{proof}
    \textcolor{white}{asfadf}
    \begin{enumerate}
        \item Denote by $\int_B p\left(\begin{bmatrix}x_A \\ x_B\end{bmatrix}\right) dB$ the marginalization over all elements of $x_B$. Then via \eqref{mixturenintegral} we marginalize out $x_B$ in \eqref{mixturenormalstack} as

        $$\int_B p\left(\begin{bmatrix}x_A \\ x_B\end{bmatrix}\right) dB = \int_B \sum_{i=1}^k \pi_i \mathcal{N}\left(\begin{bmatrix}x_A \\ x_B\end{bmatrix}\bigg\vert\begin{bmatrix}\mu_{i,A} \\ \mu_{i,B}\end{bmatrix}, \begin{bmatrix}
                                                     \Sigma_{i,A} & \Sigma_{i,AB}^T  \\
                                                     \Sigma_{i,AB} & \Sigma_{i,B}  \\
                                                    \end{bmatrix}\right) dB$$
        
        $$=\sum_{i=1}^k \pi_i \int_B \mathcal{N}\left(\begin{bmatrix}x_A \\ x_B\end{bmatrix}\bigg\vert\begin{bmatrix}\mu_{i,A} \\ \mu_{i,B}\end{bmatrix}, \begin{bmatrix}
         \Sigma_{i,A} & \Sigma_{i,AB}^T  \\
         \Sigma_{i,AB} & \Sigma_{i,B}  \\
        \end{bmatrix}\right) dB$$
    
    which reduces the problem to marginalizing '$B$' out of $k$ multivariate Normal distributions. The result then follows from the marginalization properties of Gaussian variables.
    
    \item We have
    
        $$p(x)=\sum_{i=1}^k \pi_i \mathcal{N}(x|\mu_{i}, \Sigma_{i})=\sum_{i=1}^k \pi_i \mathcal{N}(x_A|\mu_{i,A|B}, \Sigma_{i,A|B})\mathcal{N}(x_B|\mu_{i,B},\Sigma_{i,B})\,\, \left(=p\left(\begin{bmatrix}x_A \\ x_B\end{bmatrix}\right)\right)$$
        
        Hence
        
        $$p(x_A|x_B)=\frac{p\left(\begin{bmatrix}x_A \\ x_B\end{bmatrix}\right)}{p(x_B)}=\frac{\sum_{i=1}^k \pi_i \mathcal{N}(x_A|\mu_{i,A|B}, \Sigma_{i,A|B})\mathcal{N}(x_B|\mu_{i,B},\Sigma_{i,B})}{p(x_B)}$$
        
        $$=\sum_{i=1}^k\frac{ \pi_i \mathcal{N}(x_B|\mu_{i,B},\Sigma_{i,B})}{p(x_B)}\mathcal{N}(x_A|\mu_{i,A|B}, \Sigma_{i,A|B})$$
        $$=\sum_{i=1}^k \frac{\pi_i \mathcal{N}(x_B|\mu_{i,B},\Sigma_{i,B})}{\sum_{j=1}^k \pi_j \mathcal{N}(x_B|\mu_{j,B},\Sigma_{j,B})}\mathcal{N}(x_A|\mu_{i,A|B},\Sigma_{i,A|B}).$$\newline

    \end{enumerate}
    
    Where the conditional Gaussian parameters for the rightmost factor are calculated as usual.

\end{proof}

\textcolor{white}{asdfasdf}\newline

\begin{lemma}
    Let $X\sim \mathcal{MXN}(\pi_1,...,\pi_k;\mu_1,...,\mu_k;\Sigma_1,...,\Sigma_k), Y\sim\mathcal{N}(0,I\sigma^2)$ and let 
    
    $$Z=X + Y.$$
    
    Then it follows that
    
    $$Z\sim \mathcal{MXN}(\pi_1,...,\pi_k;\mu_1,...,\mu_k;\Sigma_1+I\sigma^2,...,\Sigma_k+I\sigma^2)$$
\end{lemma}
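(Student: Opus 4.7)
The plan is to compute the density of $Z$ directly via convolution and exploit, once more, the linearity identity \eqref{mixturenintegral} that has driven every other result in this section. Since $X$ and $Y$ are independent, the density of the sum is
\[
p_Z(z) \;=\; \int_{\mathbb{R}^d} p_X(x)\,p_Y(z-x)\,dx.
\]
Substituting the mixture density for $p_X$ gives $p_Z(z) = \int \sum_{i=1}^k \pi_i\,\mathcal{N}(x|\mu_i,\Sigma_i)\,\mathcal{N}(z-x|0,I\sigma^2)\,dx$, and the integral and finite sum commute exactly as in \eqref{mixturenintegral}, yielding
\[
p_Z(z) \;=\; \sum_{i=1}^k \pi_i \int_{\mathbb{R}^d} \mathcal{N}(x|\mu_i,\Sigma_i)\,\mathcal{N}(z-x|0,I\sigma^2)\,dx.
\]

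The inner integral, for each fixed $i$, is just the convolution of two Gaussian densities and hence equals $\mathcal{N}(z|\mu_i,\Sigma_i + I\sigma^2)$ by the standard sum-of-independent-Gaussians identity. Plugging this back in gives
\[
p_Z(z) \;=\; \sum_{i=1}^k \pi_i\, \mathcal{N}(z|\mu_i,\Sigma_i + I\sigma^2),
\]
which is precisely the density of $\mathcal{MXN}(\pi_1,\dots,\pi_k;\mu_1,\dots,\mu_k;\Sigma_1+I\sigma^2,\dots,\Sigma_k+I\sigma^2)$, and the claim follows.

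There is essentially no obstacle here: the only nontrivial ingredient is the Gaussian convolution identity, which is classical, and the only structural observation needed is that the mixture weights $\pi_i$ and the index of summation are unaffected by adding an independent Gaussian noise, because the noise acts identically on every component. If one preferred a more probabilistic argument, one could instead introduce an auxiliary latent index $I \in \{1,\dots,k\}$ with $P(I=i)=\pi_i$ such that $X \mid I=i \sim \mathcal{N}(\mu_i,\Sigma_i)$; then conditional on $I=i$, $Z = X+Y \sim \mathcal{N}(\mu_i,\Sigma_i+I\sigma^2)$ by independence of $Y$ from $(X,I)$, and marginalizing over $I$ recovers the stated mixture. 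Both routes are short and routine, so only the convolution version is worth writing out in the paper.
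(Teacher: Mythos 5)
Your proof is correct, but it takes a different route from the paper's. The paper argues via moment generating functions: it writes $M_Z(t)=\mathbb{E}[e^{t^T(X+Y)}]$, factors the expectation using independence, pulls the mixture sum out of the integral, multiplies the component MGFs by the Gaussian noise MGF, and concludes by uniqueness of the MGF. You instead work directly with densities, writing $p_Z$ as the convolution $\int p_X(x)p_Y(z-x)\,dx$, commuting the finite sum with the integral, and invoking the classical Gaussian convolution identity componentwise. The two arguments are essentially dual to one another (convolution of densities versus product of transforms), and both hinge on the same structural facts: independence of $X$ and $Y$, linearity letting the mixture weights pass through untouched, and a standard closure property of Gaussians under summation. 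Your convolution route has the minor advantage of staying entirely in density space, which matches the way the mixture is defined in \eqref{gaussianmixture} and avoids appealing to the uniqueness theorem for MGFs; the paper's MGF route avoids having to justify the Gaussian convolution integral. One small point worth making explicit in either version: the lemma statement does not actually say that $X$ and $Y$ are independent, yet both your convolution formula and the paper's factorization $p(x)p(y)$ require it, so the hypothesis should be stated. Your closing remark about the latent-index formulation is a nice observation but, as you say, redundant given the convolution argument.
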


\begin{proof}
    We show this via the Moment Generating Function (MGF) of $X$, $M_X(t)$, with $t\in\mathbb{R}^k$:
    
    $$M_Z(t)=\mathbb{E}[e^{t^T(X+Y)}]=\int \int e^{t^T(x+y)}p(x)p(y)dx dy$$
    $$=\int\int e^{t^Tx}\sum_i \pi_i p_i(x)dx\,e^{t^Ty} p(y)dy=\sum_i \pi_i \int e^{t^Ty}p(y)dy\int e^{t^Tx} p_i(x)dx=\sum_i \pi_i M_{Y}(t)M_{i,X}(t),$$
    $$=\sum_{i=1}^{k}\pi_i e^{t^T I\sigma^2 t}e^{(t^T\mu_i + t^T\Sigma_i t)}=\sum_{i=1}^{k}\pi_i e^{(t^T\mu_i + t^T(\Sigma_i+I\sigma^2) t)}.$$
    
    where the final term describes the MGF of a Mixture of Normal distributions with the prescribed parameters. As a probability distribution is uniquely defined by its MGF, the assertion follows.
    
\end{proof}

\begin{lemma}
    Let $p(x)=\mathcal{MXN}(x|\pi_1,...,\pi_k;\mu_1,...,\mu_k;\Sigma_1,...,\Sigma_k)$. Then there exists a decomposition
    
    \begin{equation}\label{mixdecomp}
        p(x)=\int_{z_1,...,z_k}p(x|z_1,...,z_k)\prod_{i=1}^k p(z_i)dz_1\cdots dz_k
    \end{equation}
    
    such that $p(z_i)=\mathcal{N}(z_i|\mu_i,\Sigma_i)$.\newline
\end{lemma}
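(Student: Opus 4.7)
The plan is to exhibit an explicit conditional kernel $p(x\mid z_1,\ldots,z_k)$ that marginalizes, against the product measure $\prod_i \mathcal{N}(z_i\mid\mu_i,\Sigma_i)$, to the desired mixture. Since the statement only asserts existence of such a decomposition, any valid construction suffices.

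The natural candidate is a ``selector'' kernel: define
\[
    p(x\mid z_1,\ldots,z_k) \;=\; \sum_{i=1}^k \pi_i\,\delta(x - z_i),
\]
which, probabilistically, corresponds to sampling an index $s\in\{1,\ldots,k\}$ from a multinomial with probabilities $\pi_1,\ldots,\pi_k$ (independent of the $z_i$'s) and returning $x=z_s$. This is a bona fide probability kernel in $x$ for each fixed $z_1,\ldots,z_k$, since $\sum_i\pi_i=1$. It also makes no use of the particular structure of the Gaussian components beyond the fact that they are the marginals prescribed on the $z_i$'s.

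To verify the decomposition, I would substitute this kernel into the right-hand side of (\ref{mixdecomp}) and exchange sum and integral by linearity (as in (\ref{mixturenintegral})):
\[
    \int p(x\mid z_1,\ldots,z_k)\prod_{i=1}^k \mathcal{N}(z_i\mid\mu_i,\Sigma_i)\,dz_1\cdots dz_k
    \;=\; \sum_{i=1}^k \pi_i \int \delta(x-z_i)\,\mathcal{N}(z_i\mid\mu_i,\Sigma_i)\,dz_i \prod_{j\neq i}\int \mathcal{N}(z_j\mid\mu_j,\Sigma_j)\,dz_j.
\]
The sifting property of the delta reduces the $z_i$-integral to $\mathcal{N}(x\mid\mu_i,\Sigma_i)$, and each of the remaining integrals over $z_j$, $j\neq i$, equals $1$. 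Summing over $i$ yields exactly $\sum_{i=1}^k\pi_i\,\mathcal{N}(x\mid\mu_i,\Sigma_i)=p(x)$.

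The only real subtlety is the use of the Dirac delta, which is a distribution rather than a density, but this can be sidestepped by interpreting the construction measure-theoretically: let $s$ be a multinomial random index independent of $(z_1,\ldots,z_k)\sim\prod_i\mathcal{N}(\mu_i,\Sigma_i)$ and take $p(x\mid z_1,\ldots,z_k)$ as the regular conditional distribution of $x:=z_s$ given $(z_1,\ldots,z_k)$, which is the finite mixture of point masses at the $z_i$'s with weights $\pi_i$. The marginal law of $x$ is then $\sum_i\pi_i\mathcal{N}(\mu_i,\Sigma_i)$ by the tower property, which is precisely the claim. So the decomposition exists, as required.
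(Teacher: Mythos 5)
Your proof is correct and takes essentially the same approach as the paper: both use the selector kernel $p(x\mid z_1,\ldots,z_k)=\sum_{i=1}^k\pi_i\,\delta(x-z_i)$ and evaluate the integral via the sifting property of the Dirac delta. Your added measure-theoretic reading (a multinomial index independent of the $z_i$'s, plus the tower property) is a welcome clarification of the delta-function manipulation but does not change the argument.
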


\begin{proof}
    Setting 
    
    $$p(x|z_1,...,z_k)=\sum_{i=1}^k \pi \delta_{z_i}(x)$$
    
    with $\delta_{z_i}(x)=\delta(x-z_i)$ the Dirac-delta distribution at $z_i$, results in the claim being true. This can be shown as follows:
    
    $$p(x)=\int_{z_1,...,z_k}\sum_{i=1}^k \pi \delta_{z_i}(x)\prod_{j=1}^k \mathcal{N}(z_j|\mu_j,\Sigma_j)dz_1\cdots dz_k$$
    $$=\sum_{i=1}^k \pi \int_{z_1,...,z_k}\delta_{z_i}(x)\prod_{j=1}^k \mathcal{N}(z_j|\mu_j,\Sigma_j)dz_1\cdots dz_k$$
    $$\stackrel{*}{=} \sum_{i=1}^k \pi_i \mathcal{N}(x|\mu_i,\Sigma_i)=\mathcal{MXN}(x|\pi_1,...,\pi_k;\mu_1,...,\mu_k;\Sigma_1,...,\Sigma_k)$$
    \newline
    
    $(^*)$ This can be seen by noticing that
    $$\int_{z_1,...,z_k}\delta_{z_i}(x)\prod_{j=1}^k\mathcal{N}(z_j|\mu_j,\Sigma_j)dz_1\cdots dz_k$$
    $$\int_{z_i}\delta_{z_i}(x)\mathcal{N}(z_i|\mu_i,\Sigma_i)dz_i\prod_{j=1\setminus i}^k \int_{z_j}\mathcal{N}(z_j|\mu_j,\Sigma_j)dz_j$$
    $$=\mathcal{N}(x|\mu_i,\Sigma_i)\cdot 1^{k-1}=\mathcal{N}(x|\mu_i,\Sigma_i)$$
    
    and 
    $$\int_{z_i}\delta_{z_i}(x)\mathcal{N}(z_i|\mu_i,\Sigma_i)dz_i$$ $$=\int_{z_i}\delta(x-z_i)\mathcal{N}(z_i|\mu_i,\Sigma_i)dz_i$$ $$=\int_{z_i}\delta(z_i-x)\mathcal{N}(z_i|\mu_i,\Sigma_i)dz_i$$
    $$=\mathcal{N}(x|\mu_i,\Sigma_i)$$
    
    follows from the symmetry property of the Dirac-delta distribution, $\delta(x)=\delta(-x)$
    
\end{proof}
\newpage
\section{Derivation of the SV-MGP ELBO}\label{svmgpelbo}
We derive this result by starting with the KL-divergence as in \eqref{klsvmgp}:

$$\int q(f,f_Z,f_{Z_m},\pi)\log\frac{q(f,f_Z,f_{Z_m},\pi)}{p(f,f_Z,f_{Z_m},\pi|y)}df df_Z df_{Z_m} d\pi$$

$$=\int q(f,f_Z,f_{Z_m},\pi)\log\frac{q(f,f_Z,f_{Z_m},\pi)p(y)}{p(y,f,f_Z,f_{Z_m},\pi)}df df_Z df_{Z_m} d\pi$$

$$\stackrel{\eqref{likelidecomp}}{=}\int q(f,f_Z,f_{Z_m},\pi)\log\frac{q(f,f_Z,f_{Z_m},\pi)p(y)}{p(y|f)p(f|f_Z,\pi)p(f|f_Z)p(f_Z|f_{Z_m}) p(\pi)}  df df_Z df_{Z_m} d\pi$$

$$\stackrel{\eqref{variationaldecomp}}{=}\int p(f|f_Z,\pi)p(f_Z|f_{Z_m})q(f_{Z_m})q(\pi)\log\frac{p(f|f_Z,\pi)p(f_Z|f_{Z_m})q(f_{Z_m})q(\pi)p(y)}{p(y|f)p(f|f_Z,\pi)p(f_Z|f_{Z_m})p(f_{Z_m})p(\pi)}df df_Z df_{Z_m} d\pi$$

$$=\int p(f|f_Z,\pi)p(f_Z|f_{Z_m})q(f_{Z_m})q(\pi)\log\frac{\cancel{p(f|f_Z,\pi)}\cancel{p(f_Z|f_{Z_m})}q(f_{Z_m})q(\pi)p(y)}{p(y|f)\cancel{p(f|f_Z,\pi)}\cancel{p(f_Z|f_{Z_m})}p(f_{Z_m})p(\pi)}df df_Z df_{Z_m} d\pi$$

$$=\int p(f|f_Z,\pi)p(f_Z|f_{Z_m})q(f_{Z_m})q(\pi)\log\frac{q(f_{Z_m})q(\pi)p(y)}{p(y|f)p(f_{Z_m})p(\pi)}df df_Z df_{Z_m} d\pi$$\\

$$=\int p(f|f_Z,\pi)p(f_Z|f_{Z_m})q(f_{Z_m})q(\pi)\log\frac{q(f_{Z_m})}{p(f_{Z_m})}df df_Z df_{Z_m}d\pi$$
$$+ \int p(f|f_Z,\pi)p(f_Z|f_{Z_m})q(f_{Z_m})q(\pi)\log\frac{q(\pi)}{p(\pi)}df df_Z df_{Z_m}d\pi$$

$$-\int p(f|f_Z,\pi)p(f_Z|f_{Z_m})q(f_{Z_m})q(\pi)\log p(y|f) df df_Z df_{Z_m}d\pi$$
$$+\int p(f|f_Z,\pi)p(f_Z|f_{Z_m})q(f_{Z_m})q(\pi)\log p(y) df df_Z df_{Z_m}d\pi$$\\

$$=KL(q(f_{Z_m})||p(f_{Z_m}))+KL(q(\pi)||p(\pi))-\mathbb{E}_{q(f)}\left[\log p(y|f))\right]+\log p(y)$$

$$\stackrel{1)}{=}\sum_{i=1}^k KL(q(f_{i_m})||p(f_{i_m}))+KL(q(\pi)||p(\pi))-\mathbb{E}_{q(f)}\left[\log p(y|f))\right]+\log p(y)$$

$$\stackrel{2)}{=}\sum_{i=1}^k KL(q(f_{i_m})||p(f_{i_m}))+KL(q(\pi)||p(\pi))-\sum_{j=1}^n\left\{\sum_{i=1}^k\tilde{\pi}_i\mathcal{N}(y_j|k_{i,j}^T K_{i,mm}^{-1}m_{f_i},\sigma^2)-\frac{1}{2\sigma^2}\tilde{k}_{i,(j,j)}-\frac{1}{2}tr\left(S_{f_i}\Lambda_{i,j}\right)\right\}$$
$$+\log p(y)$$

and the ELBO is obtained as 

$$\log p(y)\geq\sum_{j=1}^n\left\{\sum_{i=1}^k \tilde{\pi}_i\mathcal{N}(y_j|k_{i,j}^T K_{i,mm}^{-1}m_{f_i},\sigma^2)-\frac{1}{2\sigma^2}\tilde{k}_{i,(j,j)}-\frac{1}{2}tr\left(S_{f_i}\Lambda_{i,j}\right)\right\}$$
$$- \sum_{i=1}^k KL(q(f_{i_m})||p(f_{i_m}))\stackrel{3}{-}KL(q(\pi)||p(\pi))$$\\

$1)$ follows from $q(f_{Z_m})=\prod_{i=1}^k\mathcal{N}(f_{i_m}|m_{f_i},S_{f_i})),\, p(f_{Z_m})=\prod_{i=1}^k\mathcal{N}(f_{i_m}|m_{i,m},K_{i,mm})))$\\

$2)$ follows from

$$\mathbb{E}_{q(f)}\left[\log p(y|f))\right]=\int q(f) \log p(y|f) df$$
$$=\int \sum_{i=1}^k\tilde{\pi}_i \mathcal{N}(f|K_{i,nm}K_{i,mm}^{-1}m_{f_i},K_{i,nn}-K_{i,nm}K_{i,mm}^{-1}(K_{i,mm}-S_{f_i})K_{i,mm}^{-1}K_{i,mn})\log p(y|f)df$$

$$\sum_{i=1}^k\tilde{\pi}_i\int  \mathcal{N}(f|K_{i,nm}K_{i,mm}^{-1}m_{f_i},K_{i,nn}-K_{i,nm}K_{i,mm}^{-1}(K_{i,mm}-S_{f_i})K_{i,mm}^{-1}K_{i,mn})\log p(y|f)df$$

with the $k$ integration terms matching the standard SVGP integration term as in \cite{hensmanbig} and thus simplified in the same manner.\\

$3)$ $KL(q(\pi)||p(\pi))=KL(Dir(\pi|\tilde{\alpha})||Dir(\pi|\tilde{\alpha}))$
$$=\log\Gamma(\tilde{\alpha}_0)-\sum_{i=1}^k\log\Gamma(\tilde{\alpha}_i)-\log\Gamma(\alpha_0)+\sum_{i=1}^k\Gamma(\alpha_i)+\sum_{i=1}^k(\tilde{\alpha}_i-\alpha_i)(\psi(\tilde{\alpha}_i)-\psi(\tilde{\alpha}_0))$$

with $\tilde{\alpha}_0=\sum_{i=1}^k \tilde{\alpha}_i,\, \alpha_0=\sum_{i=1}^k \alpha_i$; $\Gamma(\cdot)$ the gamma function, $\psi(\cdot)$ the digamma function. A proof can be found in \cite{dirichletkl}. The derivatives of the (log-) gamma and digamma functions can be calculated via the polygamma function and are implemented in most modern libraries for automatic differentiation.
\end{appendix}

\end{document}